\newtheorem{problem}[theorem]{Problem}
\DeclareMathOperator*{\argmax}{arg\,max}
\begin{document}
\jmlrheading{22}{2021}{1-\pageref{LastPage}}{5/19; Revised
4/21}{4/21}{19-364}{Kai Puolam\"aki, Emilia Oikarinen and Andreas Henelius}

\title{Guided Visual Exploration of Relations in Data Sets}

\author{\name Kai Puolam\"aki \email kai.puolamaki@helsinki.fi \\
\addr  Institute for Atmospheric and Earth System Research  \\ Department of Computer Science \\
P.O. Box 68 \\
FI-00014 University of Helsinki, Helsinki, Finland 
\AND
\name Emilia Oikarinen \email emilia.oikarinen@helsinki.fi \\
\addr Department of Computer Science\\
P.O. Box 68 \\
FI-00014 University of Helsinki, Helsinki, Finland 
\AND
\name Andreas Henelius \email andreas.henelius@op.fi \\
\addr 
OP Financial Group \\
Gebhardinaukio 1, FI-00510 Helsinki, Finland\\ \\
Department of Computer Science\\
P.O. Box 68 \\
FI-00014 University of Helsinki, Helsinki, Finland \\
}

\editor{David Blei}
\maketitle

\begin{abstract}%
Efficient explorative data analysis systems must take into account
both what a user knows and wants to know. This paper proposes a
principled framework for interactive visual exploration of relations
in data, through views most informative given the user's current
knowledge and objectives. The user can input pre-existing knowledge of
relations in the data and also formulate specific exploration
interests, which are then taken into account in the exploration. The idea is to
steer the exploration process towards the interests of the user,
instead of showing uninteresting or already known relations. The
user's knowledge is modelled by a distribution over data sets
parametrised by subsets of rows and columns of data, called tile
constraints. We provide a computationally efficient implementation of
this concept based on constrained randomisation. Furthermore, we
describe a novel dimensionality reduction method for finding the views
most informative to the user, which at the limit of no background
knowledge and with generic objectives reduces to PCA. We show that the
method is suitable for interactive use and is robust to noise,
outperforms standard projection pursuit visualisation methods, and
gives understandable and useful results in analysis of real-world
data. We provide an open-source implementation of the framework.
\end{abstract}

\begin{keywords}
exploratory data analysis, visual exploration, dimensionality reduction, constrained randomisation, iterative data mining
\end{keywords}

\section{Introduction}\label{sec:intro}
Exploratory data analysis \citep{tukey:1977}, often performed
interactively, is an established approach for learning about patterns
in a data set prior to more formal analyses. Humans are able to easily
identify patterns in the data visually, even when the patterns are
complex and difficult to model algorithmically. Visual data
exploration is hence a powerful tool for exploring patterns in the
data and a multitude of visual exploration systems have been designed
for this purpose over the years. Let us now consider some general
requirements for an \emph{efficient} visual exploration system.

\begin{enumerate}
\item[(i)] The system must \emph{take into account the user's
  knowledge} of the data, which iteratively accumulates during exploration.
\item[(ii)] The user must be shown \emph{informative views} of the
  data given the user's current knowledge.
\item[(iii)] The user must be able to \emph{steer the exploration} in
  order to answer specific questions.
\end{enumerate}

 Despite the long history of visual exploration systems, they still
 lack a principled approach with respect to these general
 requirements. In this paper we address several shortcomings related
 to these requirements. Specifically, our goal and main contribution
 is to devise a framework for human-guided data exploration by
 modelling the user's background knowledge and objectives, and using
 these to provide the user with the most informative views of the
 data.

Our contribution consists of three main parts: (i) a framework for
modelling and incorporating the user's background knowledge of the
data that can be iteratively updated, (ii) finding the most
informative views of the data, and (iii) a solution allowing the user
to steer the visual data exploration process so that specific
hypotheses formulated by the user can be answered. The first and third
contribution are general, while the second one, that is, finding the
most informative views of the data, is specific to a particular data
type. In this paper we focus on data items that can be represented as
real-valued vectors of attribute values. This paper extends our
earlier works: preprint \citep*{puolamaki2018human} and 
\citep{ecml-pkdd:tiler}, the latter of which only considers axis-aligned projections
of the data and does not take advantage of the dimensionality reduction
method presented in this work.

We next discuss the relation of our present work to existing
literature on exploratory data analysis. Our first contribution is
related to \emph{iterative data mining} \citep{hanhijarvi:2009} which
is a paradigm where patterns already discovered by the user are taken
into account as constraints during subsequent exploration. In brief,
this works as follows. The user explores the data and observes a
pattern in a view. The user marks the observed pattern as known in the
exploration system. The system then takes this newly added pattern, as
well as all other previously added patterns, into account when
constructing the next view shown to the user. The goal is to prevent
the system from showing already known information to the user
again. This concept of iterative pattern discovery is also central to
the data mining framework presented by \citeauthor{debie:2011a}
\citeyearpar{debie:2011a,debie:2011b,debie2013}, where the user's
current knowledge (or beliefs) of the data is modelled as a
probability distribution over data sets. This distribution is
updated iteratively during the exploration phase as the user discovers
new patterns. Our work has been motivated by
\citet{puolamaki2010,puolamaki:2016, kang:2016b} and
\citet*{puolamaki2017}, where these concepts have been successfully
applied in visual exploratory data analysis such that the user is
visually shown a view of the data which is maximally informative given
the user's current knowledge. Visual interactive exploration has also
been applied in different contexts, for example, in item-set mining
and subgroup discovery \citep{boley2013, dzyuba2013, vanleeuwen2015,
  paurat2014}, information retrieval \citep{ruotsalo2015}, and network
analysis~\citep{chau2011}.

Concerning our second contribution, solving the problem of determining
which views of the data are \emph{maximally informative} to the user
(and hence interesting) has been approached in terms of, for example,
different projections and measures of interestingness
\citep{debie:2016:a, kang2016, vartak:2015:a, Kang2020}. 
Constraints have also
been used to assess the significance of data mining results, for
example, in pattern mining \citep{lijffijt2014} or in investigating
spatio-temporal relations \citep{chirigati:2016:a}. We observe,
however, that a view maximally informative to the user, is a view that
contrasts the most with the user's current knowledge. Hence, this kind of a
view is maximally ``surprising'' to the user with respect to his or
her current knowledge.

However, always showing maximally informative views to the user leads
to a problem, which can be seen as one of the major shortcomings of
previous work on iterative data mining and applications to visual
exploratory data analysis. By definition, maximally informative views
given the user's existing knowledge will be surprising. Because the
user is not able to control the path that the exploration takes, it is
difficult to investigate specific hypotheses concerning the data or to
steer the exploration process. Traditional iterative data mining
hence suffers from a \emph{navigational problem}
\citep{puolamaki2010}. Our third contribution is to solve this
navigational problem by incorporating both the user's knowledge of the
data, and different hypotheses concerning the data into the background
distribution. It often is the case that the user has some
pre-existing exploration objectives before starting the analysis, or
the user develops specific hypotheses during the exploration
phase. This navigational aspect in the exploration process has, as far
as we are aware of, not been addressed previously, and we believe that
the contribution we make in this area is highly important for any real
interactive iterative data analysis framework.

\begin{figure}[t]
  \centering
  \includegraphics[width = 0.75\textwidth]{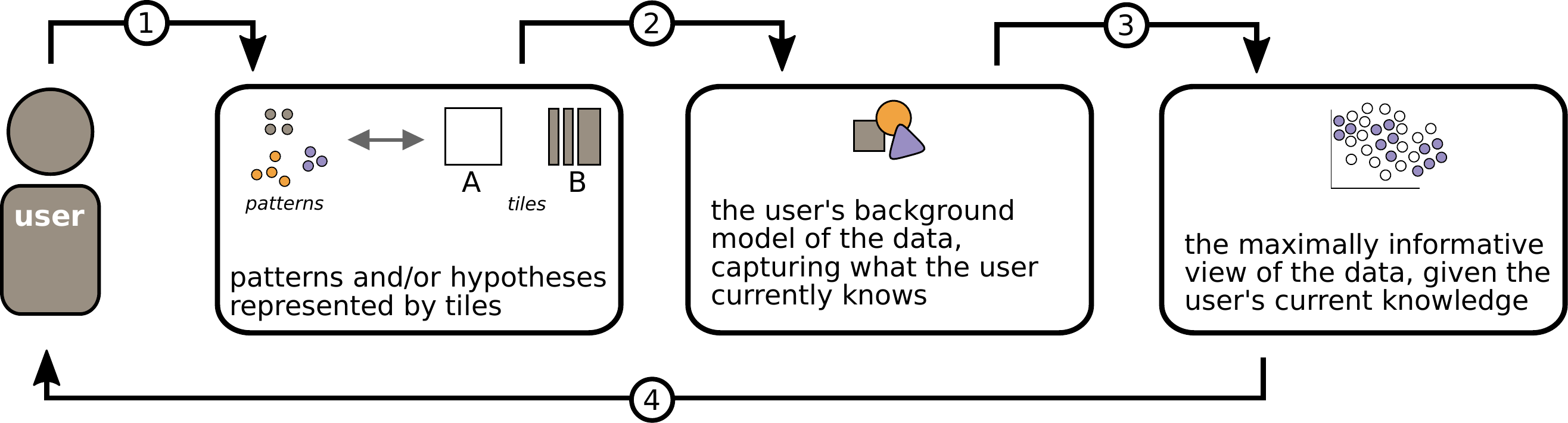}
  \caption{Overview of the exploration framework. (1) The user inputs
    his or her current knowledge and exploration objectives.  (2) A
    background distribution, which captures what the user currently
    knows is constructed.  (3) The most informative view of the data
    with respect to the background distribution and the user's
    objectives is computed. (4) The user observes the data in the
    view, recognises relations and inputs these into the background
    distribution.  The iterative data analysis process continues from
    (1).  }
  \label{fig:framework}
\end{figure}

\begin{figure}[t]
  \centering
  \begin{tabular}{ccc}
    \includegraphics[width = 0.2\textwidth]{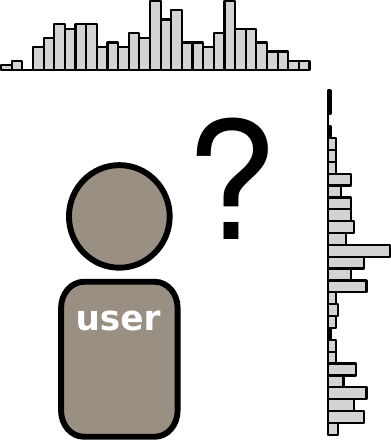} \hspace*{2em} &
    \includegraphics[width = 0.2\textwidth]{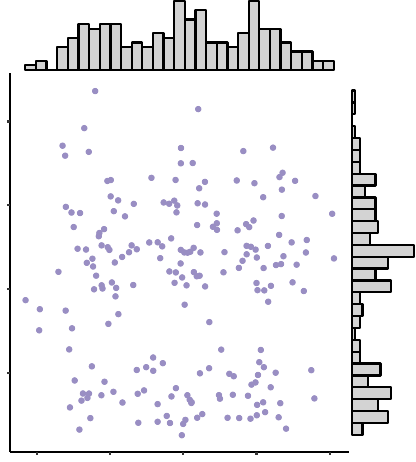} \hspace*{2em} &
    \includegraphics[width = 0.2\textwidth]{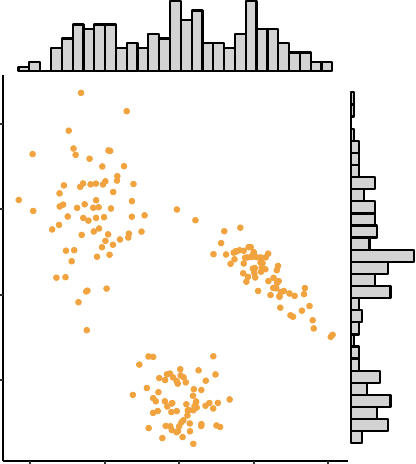} \\
    \multicolumn{3}{c}{\vspace*{0.1em}} \\
    (a) \hspace*{5em} & (b) \hspace*{5em} & (c)
  \end{tabular}
  \caption{Modelling the user's background knowledge. (a)
    Initially the user knows nothing about the data except the
    marginals. (b) A possible data sample which adheres to the user's
    current knowledge of the data. In this case, the data sample
    corresponds to a situation where all attributes have been permuted
    independently. (c) A sample corresponding to what the user could
    potentially learn from the data (in this case, this is the real
    data with all relations intact).}
  \label{fig:background}
\end{figure}

Our framework is sketched in Figure \ref{fig:framework}.  More
formally, as in \citet{lijffijt2014}, we denote the original data set
by $X$ and the set of all possible data sets by $\Omega$. We further
define a set of {\em constraints} ${\mathcal{C}}$. A constraint
$t\in{\mathcal{C}}$ is simply a subset of all possible data sets which
always also includes the original data set, that is, $X\in t\subseteq
\Omega$ is satisfied for all $t\in{\mathcal{C}}$.  Any set of
constraints ${\cal{T}}\subseteq{\mathcal{C}}$ can be used to define a
subset of data sets $\Omega_{\cal{T}}\subseteq\Omega$ that satisfy all
of the constraints in ${\cal{T}}$ by
$\Omega_{\cal{T}}=\cap_{t\in{\cal{T}}}{t}$, with $\Omega_\emptyset$
defined as $\Omega_\emptyset=\Omega$.

We assume that the user observes a set of {\em relations} such as
correlations, cluster structures etc.  from the data, as later defined
by Definition~\ref{def:relation}.  A constraint---or a set of
constraints---can either preserve or break a relation.  If the user
observes that some relations are preserved in the data the user can
infer that the data obeys the constraints that preserve these
relations.

In this paper we assume that the user's knowledge can be parametrised
by a set of constraints ${\cal T}_u$ and by a uniform distribution
over data sets in $\Omega_{{\cal{T}}_u}$, with the probability of the
data sets in the complement $\Omega\setminus\Omega_{{\cal{T}}_u}$
being zero.  We call this uniform distribution a {\em background
  distribution}, which describes the probabilities the user gives for
different possible data sets.  Intuitively, the constraints denote the
relations (or patterns) in the data that the user already is aware
of. In the user's mind, any data set that is not contradictory with
any of these constraints is equiprobable, while data sets
contradicting with any of the constraints have zero probability. In the
absence of constraints, that is, when the user knows nothing, the
user's knowledge is described by the background distribution
corresponding to the situation that the user considers all possible
data sets equally likely, as shown in Figure~\ref{fig:background}. Our objective is that the system would not show the user relations or patterns that the user is already aware of, as parametrised by constraints in ${\cal T}_u$.

Now, out of all possible {\em views} of the data (such as scatter
plots over different coordinate axes) the most informative view should be
the one that---according to some measure---shows the maximal
difference between the data set $X$ and a data set sampled from the
background distribution as in Figure~\ref{fig:informativeview}. When
looking at this view, the user may learn more about the relations in
the data and can add the gained knowledge as new constraints in
${\cal{T}}_u$, after which a new maximally informative view can be
produced. This iterative process continues until the user has learned
all he or she wants to know about the data; this is the approach
taken, for example, in \citet{puolamaki:2016} and
\citet*{puolamaki2017}. As already mentioned, the drawback of the
approach is that each new view is by definition also maximally
surprising to the user and there is no way to guide the
exploration towards the user's interests.

In this paper we complement this framework by using the constraints to
parametrise, in addition to the user's knowledge, what the user wants
to know about the data. We do this by defining a new set of
constraints, denoted by ${\cal{T}}_H\subseteq{\cal{C}}$ which defines
the relations that are of the interest to the user.  We further define
a set of constraints ${\cal{T}}_{H'}\subseteq{\cal{C}}$ which defines
the relations that are of no interest to the user.  Instead of
comparing the data and the background distribution, as earlier, we
find a view that shows the maximal difference between samples from the
uniform distributions from $\Omega_{{\cal{T}}_u\cup{\cal{T}}_{H'}}$
and $\Omega_{{\cal{T}}_u\cup{\cal{T}}_{H'}\cup {\cal{T}}_H}$,
respectively.  Notice that if we are interested in all constraints,
i.e., ${\cal{T}}_H={\cal{C}}$ and ${\cal{T}}_{H'}=\emptyset$, this new
formulation reduces to the earlier approach of \citet{puolamaki:2016}
and \citet*{puolamaki2017}, at least if all of the constraints
together allow only the original data set, or
$\Omega_{{\cal{C}}}=\{X\}$.

\begin{figure}[t]
  \centering
  \begin{tabular}{ccc}
    \textbf{Informative view} & & \textbf{Uninformative view} \\
    & & \\
    \includegraphics[width = 0.4\textwidth]{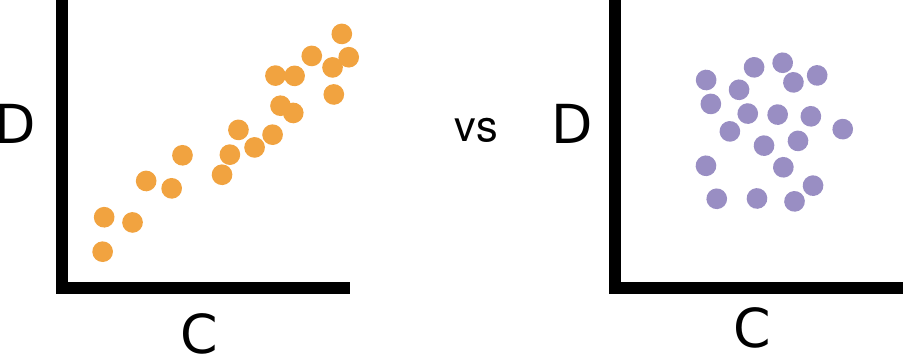} &
    \hspace*{3em} &
    \includegraphics[width = 0.4\textwidth]{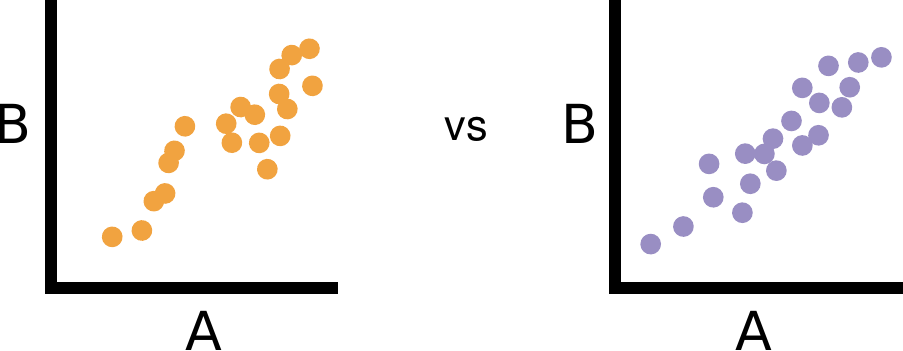}
  \end{tabular}
  \caption{Two data samples (shown in orange and purple) in two
    projections.  A view is informative if the data samples differ,
    which means that the user can learn something new about the
    data. In the left view the data samples differ with respect to the
    attributes $C$ and $D$ and the view is informative. In contrast,
    in the right view the data samples are essentially the same and
    the user learns nothing new about the relation between attributes
    $A$ and $B$, and consequently the view is uninformative.}
  \label{fig:informativeview}
\end{figure}

The advantage of the new formulation is that by expressing the user's
knowledge and the user's objectives using the same parametrisation we
can in an elegant way formalise the data exploration process as
finding views that show differences between two distributions, or
between samples from two distributions.

The above discussion is generic for all classes of data sets and
constraints. However, in the remainder of the paper we assume that the
data set $X$ is a data table with rows corresponding to data items and
columns to attributes, and the set $\Omega$ consists of all data sets
that can be obtained by randomly permuting the columns of $X$. The
constraints in ${\cal{C}}$ are parametrised by {\em tiles} containing
subsets of rows and columns, respectively.  Samples from the
constrained distribution can then be obtained efficiently by
permutations, as described later in Section \ref{sec:methods}, which
also gives the exact definitions for the concepts mentioned above. The
views considered in this paper are scatterplots of the data obtained
by linear projections that show the maximal differences of the
distributions described above.

\paragraph{Contributions}
In summary, our contributions are:
\begin{enumerate}
\item[(1)] a computationally efficient formulation and implementation
  of the user's background knowledge of the data and objectives (which
  we here call hypotheses) using constrained randomisation,
\item[(2)] a dimensionality reduction method for finding the view most
  informative to the user, and
\item[(3)] an experimental evaluation that supports that our approach
  is fast, robust, and produces easily understandable results.
\end{enumerate}

\paragraph{Outline}
The rest of this paper is organised as follows.  Our framework is
formalised in Section~\ref{sec:methods}, where we describe how to
model the user's background knowledge of the data, how data
exploration objectives are formulated and updated, and how maximally
informative views are determined. In Section~\ref{sec:experiments} we
empirically evaluate our framework, by considering both computational
efficiency and robustness against noise, as well as provide use cases
of user-guided exploration. We conclude the paper with a discussion in
Section~\ref{sec:conclusions}.

\section{Methods}\label{sec:methods}

Let $X$ be an $n \times m$ data matrix (data set). Here $X(i,j)$
denotes the $i$th element in column $j$. Each column $X(\cdot, j),\ j
\in [m]$, is an \emph{attribute} in the data set, where we use the shorthand
$[m] = \{1, \ldots, m\}$. Let $D$ be a finite set of domains (for
example, continuous or categorical) and let $D(j)$ denote the domain
of $X(\cdot, j)$. Also let $X(i, j) \in D(j)$ for all $i \in [n]$ and
$j \in [m]$, that is, all elements in a column belong to the same
domain but different columns can have different domains. The
derivations in Sections~\ref{sec:bg}
and~\ref{sec:formulatinghypotheses} are generic with respect to
domains, but in Section~\ref{sec:view} we consider only real numbers,
that is, $D(j) \subseteq \mathbb{R}$ for all $j\in[m]$.

\subsection{Permutations and Tile Constraints}
\label{sec:bg}
We proceed to introduce the permutation-based sampling method and tile
constraints which are used to constrain the sampled distributions as
well as to express the user's background knowledge and objectives
(hypotheses). The distributions are constructed so that in the absence
of constraints (tiles) the marginal distributions of the attributes
are preserved.

We define a \emph{permutation} of the data matrix $X$ as follows.
\begin{definition}[Permutation]\label{def:permutation}{
  Let ${\mathcal P}_n$ denote the set of permutation functions of length
  $n$ such that
  $\pi:[n]\to[n]$ is a bijection for all
  $\pi\in{\mathcal P}_n$, and denote by
  $\Pi=\left(\pi_1,\ldots,\pi_m\right)\in{\mathcal P}^m_n$ the vector of
  column-specific permutations. A permutation $\Pi(X)=\widehat X$ of a $n \times m$ data
  matrix $X$ is then given as $\widehat X(i,j)=X(\pi_j(i),j)$.}
\end{definition}
When permutation functions are sampled uniformly at random, we obtain
a uniform sample from the distribution of data sets where each of the
attributes has the same marginal distribution as the original
data. Hence, given a data set $X$, the set of possible data sets is
$\Omega=\{ \Pi(X) \mid \Pi \in{\mathcal P}^m_n\}$.

\begin{figure}[t]
  \centering
  \begin{subfigure}{\textwidth}
    \begin{minipage}[t]{.45\linewidth}
      \vspace{0pt} \textbf{Unconstrained permutation}\\ All attributes
      $A, \ldots, D$ are permuted independently. This breaks all
      relations between the attributes, but the marginals are preserved.
    \end{minipage}
    \hfill
    \begin{minipage}[t]{.45\linewidth}
      \vspace{0pt}
      \includegraphics[width = \textwidth]{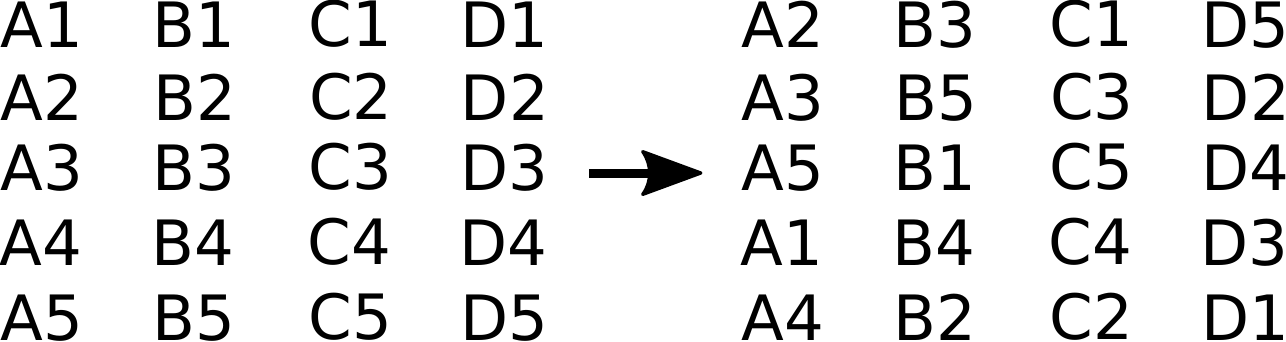}
    \end{minipage}
  \end{subfigure}

  \vspace*{2em}

  \begin{subfigure}{\textwidth}
    \begin{minipage}[t]{.45\linewidth}
      \vspace{0pt} \textbf{Constrained permutation}\\ The permutation
      is constrained with a tile, that is,  attributes in the
       tile are permuted together. The marginals are also preserved.
    \end{minipage}
    \hfill
    \begin{minipage}[t]{.45\linewidth}
      \vspace{0pt}
      \includegraphics[width = \textwidth]{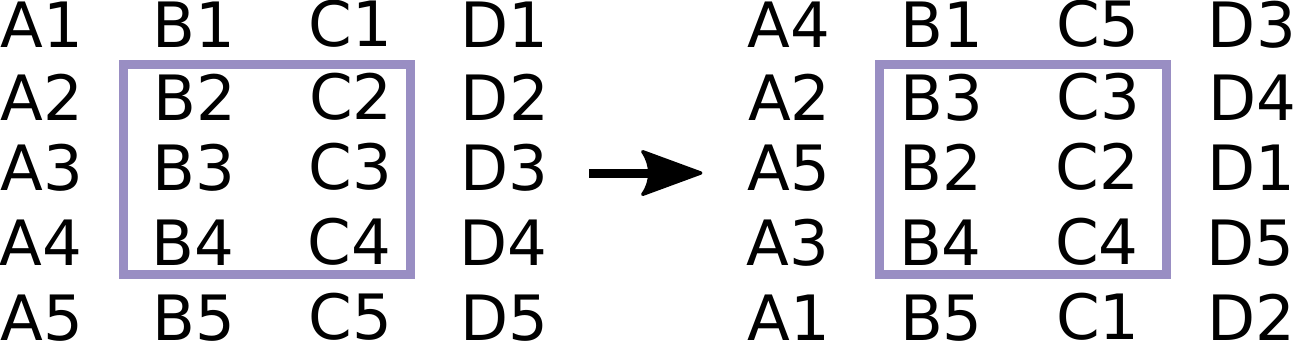}
    \end{minipage}
  \end{subfigure}

  \caption{An unconstrained permutation (top) and a permutation
    constrained with a tile (bottom, tile shown with a solid
    border). The data has four attributes ($A,B,C$, and $D$) and there
    are five data items (rows) in the data set.}
  \label{fig:permutations}
\end{figure}

A \emph{tile} is a tuple $t = (R, C)$, where $R \subseteq [n]$ and $C
\subseteq [m]$.  The tiles considered here are combinatorial (in
contrast to geometric), meaning that rows and columns in the tile do
not need to be consecutive.  In the unconstrained case, there are
$(n!)^m$ allowed vectors of permutations.  We parametrise
distributions using \emph{tile constraints} preserving the relations
in a data matrix $X$ for subsets of rows and columns. The tiles
constrain the set of allowed permutations as follows.
\begin{definition}[Tile constraint]
  \label{def:tileconstraint}
  Given a tile
  $t = (R,C)$ where  $R
\subseteq [n]$ and $C \subseteq [m]$,  a vector of permutations
  $\Pi=\left(\pi_1,\ldots,\pi_m\right)\in{\mathcal P}^m_n$ is allowed by $t$
  iff the following condition is true for all $i\in[n]$, $j\in[m]$,
  and $j'\in[m]$:
$$
i\in R \textrm{ and } j,j'\in C  \\
\implies \pi_j(i)\in R \textrm{ and } \pi_j(i)=\pi_{j'}(i)
.$$
Given a set of tiles $T$, a vector of
permutations $\Pi$ is allowed iff $\Pi$ is allowed by all $t\in T$.
For an empty set of tiles $T=\emptyset$, all permutations in ${\mathcal P}^m_n$
are allowed.
\end{definition}
A tile defines a subset of rows and columns, and the rows in this
subset are permuted by the same permutation function in each column in
the tile. In other words, the relations between the columns inside the
tile are preserved.  Thus, given a data set $X$ and a set of tiles
$T$, the subset of data sets in $\Omega$ that satisfy all of the tile
constraints in $T$ is given by
$$\Omega_T=\{ \Pi(X) \mid \Pi \in{\mathcal P}^m_n\mbox{ and }\Pi\mbox{ is allowed by }T\}.$$
  Notice that the identity permutation
is always an allowed permutation. Figure~\ref{fig:permutations} shows
an example of both unconstrained permutation and permutation
constrained with a tile.

We proceed to define formally what we mean by relations in this paper.
\begin{definition}[Relation]
\label{def:relation}
A relation is a real-valued function $f$ over $n \times m$ data
matrices~$X$.  Given a set of tiles $T$, we say that that $T$
preserves the relation $f$, if $f(X)=f(\Pi(X))$ is satisfied for all
permutations $\Pi$ allowed by $T$.  Otherwise, we say that $T$ breaks
the relation~$f$.
\end{definition}
Thus, we use the term relation to denote any structure in the data
which can be controlled (that is, essentially broken if need be) in
the permutation scheme parametrised by the tile constraints. In
practise, some tolerance could be included in the above definition
for the condition $f(X)=f(\Pi(X))$ instead of exact equivalence. Examples
of relations conforming to the above definition include correlations
between attributes, and cluster structures. For example, for a real
valued data matrix a relation could be defined as a covariance between
columns $a$ and $b$, i.e.,
$f(X)=\sum\nolimits_{i=1}^n{X(i,a)X(i,b)}/n$. A set of tiles
containing a tile $t=([n],\{a,b\})$ preserves this relation. On the
other hand, a set of tiles allowing some of the rows in
columns $a$ and $b$ to be permuted independently breaks the
relation $f$. Another example of a possible relation are the
scagnostics for a scatterplot visualisation \citep{wilkinson:2005:a}.

We make an implicit assumption that if the user observes in the data
that certain (visual) relations are preserved, then the user can
conclude that the data also obeys constraints that preserve those same
relations. The user can then add these constraints to the background
distribution. Note that the relations $f$ correspond to visual
patterns (correlations, cluster structures, outliers etc.) which the
user possibly observes. The relations $f$ are not evaluated
by the computer, but they are part of the user's cognitive
processing of the visualisations. Therefore, in practical
applications, there is usually no need---nor would it be possible---to
define all of the relations $f$ explicitly. For our purposes it suffices
that the user can to a reasonable accuracy match the observed
visual relations in the data to the corresponding constraints.

We use tile constraints to describe the user's knowledge concerning
relations in the data.  As the user views the data he or she can
observe relations and represent these as tile constraints. For
example, the user can mark an observed cluster structure with a tile
involving the data points in the cluster and the relevant
attributes. We denote the set of user-defined tiles by ${\mathcal
  T}_u$. Then, a uniform distribution from $\Omega_{{\cal T}_u}=\{
\Pi(X) \mid \Pi \in{\mathcal P}^m_n\mbox{ and }\Pi\mbox{ is allowed by
}{\cal T}_u\}$ is the {\em background distribution}, which describes
the probabilities the user gives for different possible data sets.

We can now formulate our sampling problem as follows.
\begin{problem}[Sampling problem]
\label{def:samplingproblem}
Given a set of tiles $T$,  draw samples
uniformly at random from vectors of permutations  $\Pi\in{\mathcal P}^m_n$ allowed by $T$.
\end{problem}
The sampling problem is trivial when the tiles are non-overlapping,
since permutations can be done independently within each
non-overlapping tile. However, in the case of overlapping tiles,
multiple constraints can affect the permutation of the same subset of
rows and columns and this issue must be resolved. To this end, we need
to define the equivalence of two sets of tiles, which means that the
same constraints are enforced on the permutations.
\begin{definition}[Equivalence of sets of tiles]
\label{def:equivalence}
Let $T$ and $T'$ be two sets of tiles. $T$ is \emph{equivalent} to
$T'$, if for all vectors of permutations $\Pi\subseteq
\mathcal{P}^m_n$ it holds:
\begin{equation*}
\Pi\mbox{ is allowed by }T\mbox{ iff }\Pi\mbox{ is allowed by }T'.
\end{equation*}
\end{definition}

\begin{figure}[t]
  \centering \includegraphics[width = 0.45\textwidth]{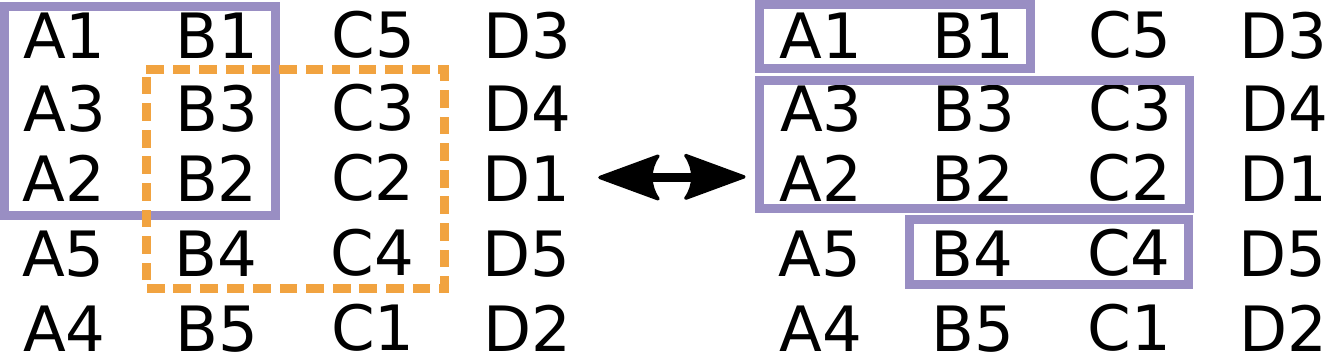}
  \caption{Merging the two overlapping purple (solid border) and
    orange (dashed border) tiles leads to three non-overlapping
    tiles. Data as in Figure~\ref{fig:permutations}.}
  \label{fig:merging}
\end{figure}

We use the term \emph{tiling} for a set of tiles $\mathcal{T}$ where
no tiles overlap.  Next, we show that there always exists a tiling
equivalent to a set of tiles.
\begin{theorem}
Given a set of (possibly overlapping) tiles $T$, there exists a tiling
 $\mathcal{T}$ that is equivalent to $T$.
\end{theorem}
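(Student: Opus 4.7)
The plan is to construct $\mathcal{T}$ by iteratively eliminating overlaps between pairs of tiles in $T$. Two tiles $t_1 = (R_1, C_1)$ and $t_2 = (R_2, C_2)$ overlap precisely when $R_1 \cap R_2 \neq \emptyset$ and $C_1 \cap C_2 \neq \emptyset$ (that is, they share at least one cell). Whenever such a pair exists, I will replace it by the three row-disjoint tiles
\[
t'_1 = (R_1 \setminus R_2,\, C_1),\quad t'_2 = (R_1 \cap R_2,\, C_1 \cup C_2),\quad t'_3 = (R_2 \setminus R_1,\, C_2),
\]
whose row-parts partition $R_1 \cup R_2$, so these three are pairwise non-overlapping by construction. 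The proof then reduces to two tasks: (a) showing each such replacement preserves the set of allowed permutations, and (b) showing that the iteration terminates.

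For (a), the easy direction is that every permutation allowed by $\{t'_1, t'_2, t'_3\}$ is allowed by $\{t_1, t_2\}$, because the new image sets are subsets of $R_1$ or $R_2$ and the new equality constraints imply the old ones. The converse is the main obstacle, for two reasons: $t'_2$ asserts a \emph{chained} equality $\pi_j(i) = \pi_{j'}(i)$ for $j, j' \in C_1 \cup C_2$ (mixing columns from the two original tiles), and $t'_1$ imposes a \emph{tighter} image constraint $\pi_j(i) \in R_1 \setminus R_2$ rather than just $R_1$. The chained equality is handled by fixing any $j^* \in C_1 \cap C_2$ (nonempty by the overlap assumption) and transitively linking $j \in C_1$ to $j^*$ via $t_1$ and $j^*$ to $j' \in C_2$ via $t_2$; this simultaneously forces the common value to lie in $R_1 \cap R_2$. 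The tighter image constraint is handled by noting that, for $j \in C_1 \cap C_2$, the map $\pi_j$ sends $R_1 \cap R_2$ into itself (apply both tile constraints) and is therefore a bijection on this finite set; consequently no element of $R_1 \setminus R_2$ can map into $R_1 \cap R_2$ without contradicting injectivity of $\pi_j$ on $[n]$, leaving $\pi_j(i) \in R_1 \setminus R_2$ as required. The case $j \in C_1 \setminus C_2$ reduces to the previous one via the equality constraint from $t_1$ applied to some $j^* \in C_1 \cap C_2$, and $t'_3$ is symmetric.

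For (b), the plan is to use the potential
\[
\Phi(T) = \sum_{\{t, t'\} \subseteq T,\ t \neq t'} |R_t \cap R_{t'}| \cdot |C_t \cap C_{t'}|,
\]
which counts total pairwise cell overlap and takes values in the non-negative integers. A short calculation (inclusion--exclusion on the column side, combined with the fact that the new tiles' row sets partition $R_1 \cup R_2$) shows that for every other tile $t_k$ the total overlap of $t_k$ with $\{t'_1, t'_2, t'_3\}$ is at most its overlap with $\{t_1, t_2\}$, while the overlap of the pair $(t_1, t_2)$ itself drops from $|R_1 \cap R_2| \cdot |C_1 \cap C_2| > 0$ to $0$. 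Hence each replacement strictly decreases $\Phi$, so the procedure terminates after finitely many steps, yielding a non-overlapping tile set $\mathcal{T}$ equivalent to $T$.
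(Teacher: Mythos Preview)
Your proof is correct and follows exactly the paper's approach: the same three-tile replacement $t'_1,t'_2,t'_3$ and the same iterative resolution of overlaps. In fact you are more careful than the paper, which simply asserts equivalence and termination without argument, whereas you supply the bijectivity-on-$R_1\cap R_2$ reasoning for the tightened image constraints and a potential $\Phi$ that the paper never introduces.
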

\begin{proof}
Let $t_1 = (R_1, C_1)$ and $t_2 = (R_2, C_2)$ be two overlapping
tiles.  Each tile describes a set of constraints on the allowed
permutations of the rows in their respective column sets $C_1$ and
$C_2$. A tiling $\{t'_1, t'_2, t'_3\}$ equivalent to $\{t_1,t_2\}$ is
given by:
$$\begin{array}{rcl}
  t'_1& = &(R_1\setminus R_2, C_1), \\
  t'_2 &= &(R_1\cap R_2, C_1\cup C_2), \\
  t'_3 &= &(R_2\setminus R_1, C_2).
\end{array}$$
\noindent Tiles $t'_1$ and $t'_3$
represent the non-overlapping parts of $t_1$ and $t_2$ and the
permutation constraints by these parts can be directly met. Tile
$t'_2$ takes into account the combined effect of $t_1$ and $t_2$ on
their intersecting row set, in which case the same permutation
constraints must apply to the union of their column sets. It follows
that these three tiles are non-overlapping and enforce the combined
constraints of tiles $t_1$ and $t_2$. Hence, a tiling can be
constructed by iteratively resolving overlap in a set of tiles until
no tiles overlap.
\end{proof}
Notice that merging overlapping tiles leads to wider (larger column
set) and lower (smaller row set) tiles. An example is shown in
Figure~\ref{fig:merging}. The limiting case is a fully-constrained
situation where each row is a separate tile and only the identity
permutation is allowed. We provide an efficient algorithm with the
time complexity $\mathcal{O}{(n m)}$ for merging tiles in
Appendix~\ref{app:algo}.

\subsection{Formulating Hypotheses}
\label{sec:formulatinghypotheses}
As discussed in the introduction, in order to model what the user
wants to know about the data, we define two sets of constraints:
${\cal{T}}_H$ (the relations that are of the interest for the user)
and ${\cal{T}}_{H'}$ (the relations that are of no interest for the
user).  We then find a view that shows the maximal difference between
the uniform distributions from
$\Omega_{{\cal{T}}_u\cup{\cal{T}}_{H'}}$ and
$\Omega_{{\cal{T}}_u\cup{\cal{T}}_{H'}\cup {\cal{T}}_H}$,
respectively.  We now formalise this idea by formulating a pair of
\emph{hypotheses}, concisely represented using the tilings defined in
the previous section.  This provides a flexible method for the user to
specify the relations in which he or she is interested.

\begin{definition}[Hypothesis tilings]
  \label{def:hypothesis}
Given a subset of rows $R\subseteq[n]$, a subset of columns
$C\subseteq[m]$, and a $k$-partition of the columns given by
$C_1,\ldots,C_k$, such that $C=\cup_{i=1}^k{C_k}$ and $C_i\cap
C_j=\emptyset$ if $i\ne j$, a pair of hypothesis tilings is given by
$\mathcal{T}_{H_1}=\{(R,C)\}$ and
$\mathcal{T}_{H_2}=\cup_{i=1}^k{\{(R,C_i)\}}$.
\end{definition}
The hypothesis tilings define the items $R$ and attributes $C$ of
interest, and, through the partition of $C$, the relations between the
attributes in which the user is interested. \textsc{Hypothesis~1}
($\mathcal{T}_{H_1}$) corresponds to a uniform distribution from the
data sets in which all relations in $(R,C)$ are preserved, and
\textsc{hypothesis 2} ($\mathcal{T}_{H_2}$) to a uniform distribution
from the data sets in which the relations between attributes in the
partitions $C_1,\ldots,C_k$ of $C$ are broken while the relations
between attributes inside each $C_i$ are preserved.  In terms of the
constraints, as discussed in Section~\ref{sec:intro}, relations
preserved by $\mathcal{T}_{H_2}$ correspond to the relations which the
user is not interested in (that is, ${\cal{T}}_{H'}$ in
Section~\ref{sec:intro}), while relations preserved by
$\mathcal{T}_{H_1}$ but broken by $\mathcal{T}_{H_2}$ correspond to
relations which user is interested in (that is, ${\cal{T}}_{H}$ in
Section~\ref{sec:intro}).

\begin{figure}[t]
  \centering
  \begin{subfigure}{\textwidth}
    \begin{minipage}[t]{.46\linewidth}
      \vspace{0pt} \textbf{Case 1:}
      Are there relations between any of the attributes?
    \end{minipage}
    \hspace{2em}
    \begin{minipage}[t]{.43\linewidth}
      \vspace{0pt}
      \includegraphics[width = \textwidth]{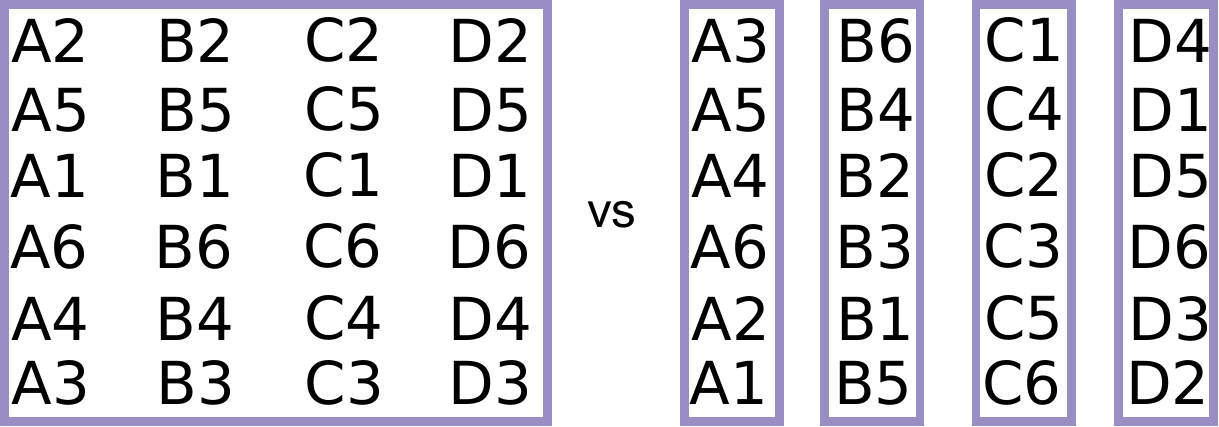}
    \end{minipage}
  \end{subfigure}

  \vspace*{2em}

    \begin{subfigure}{\textwidth}
    \begin{minipage}[t]{.46\linewidth}
      \vspace{0pt} \textbf{Case 2:}
            Are there relations between any of the attributes other than those
            constrained by the tile shown with an orange dashed border?
    \end{minipage}
    \hspace{2em}
    \begin{minipage}[t]{.43\linewidth}
      \vspace{0pt}
      \includegraphics[width = \textwidth]{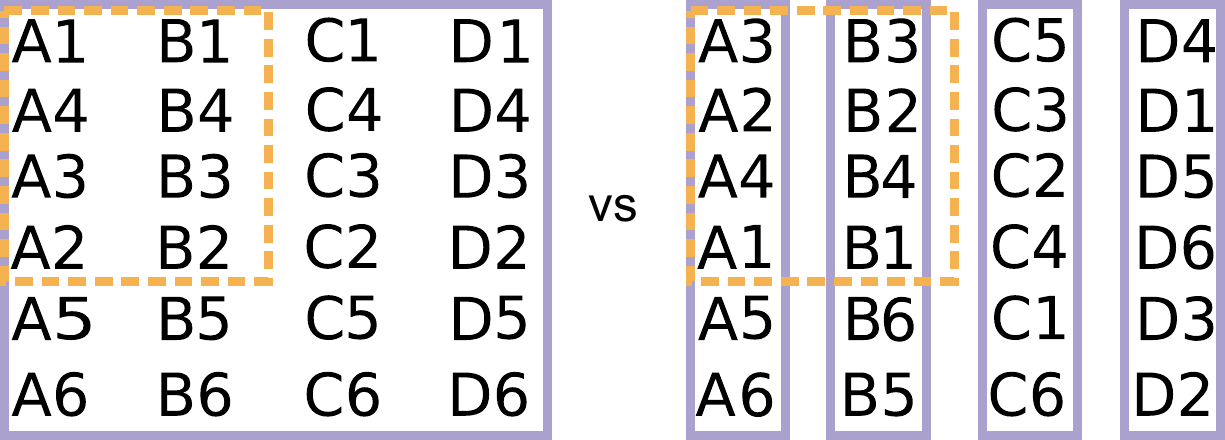}
    \end{minipage}
  \end{subfigure}

  \vspace*{2em}
  \begin{subfigure}{\textwidth}
    \begin{minipage}[t]{.46\linewidth}
      \vspace{0pt} \textbf{Case 3:} Are there relations between the
      attribute groups $\{A \}$ and $\{ B,C \}$ for items $\{ 3,4,5,6
      \}$ other than those constrained by the tile with an orange
      dashed border?
          \end{minipage}
  \hspace{2em}
    \begin{minipage}[t]{.43\linewidth}
      \vspace{0pt}
      \includegraphics[width = \textwidth]{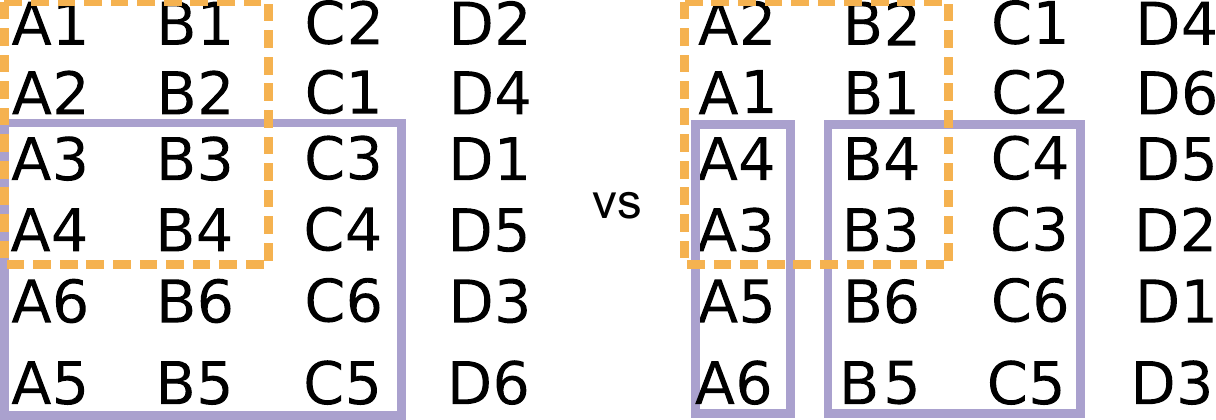}
    \end{minipage}
  \end{subfigure}

  \caption{Modelling hypotheses using sets of tiles. Tilings
    corresponding to hypotheses are shown with solid purple borders:
    \textsc{Hypothesis 1} on the left and \textsc{Hypothesis 2} on the
    right. The user's knowledge is represented by the tile with a
    dashed orange border. The data set has four attributes $A, B, C$
	and $D$, and six data items. See Section \ref{sec:iris} for
	an interpretation of these operations in terms of the iris flower data.}
  \label{fig:hypothesestiles}
\end{figure}

Now, for example, if the columns are partitioned into two groups $C_1$
and $C_2$ the user is interested in relations \emph{between} the
attributes in $C_1$ and $C_2$, but not in relations \emph{within}
$C_1$ or $C_2$. On the other hand, if the partition is full, that is,
$k=|C|$ and $|C_i|=1$ for all $i\in [k]$, then the user is interested
in \emph{all} relations between the attributes inside $(R,C)$. The
special case of $R=[n]$ and $C=[m]$ indeed reduces to {\em unguided
  data exploration}, where all inter-attribute relations in the data
are of interest to the user.

Having defined both the user's knowledge (background distribution) and
the pair of hypotheses formalising the user's objectives with tile
constraints, we can now easily combine these to formalise the uniform
distributions from the data sets we want to compare. I.e., we want to
compare the uniform distributions from $\Omega_{\mathcal{T}_u+
  \mathcal{T}_{H_1} }$ and $\Omega_{\mathcal{T}_u+
  \mathcal{T}_{H_2}}$, where `$+$' is used with a slight abuse of
notation to denote the operation of merging two tilings (with possible
overlaps between their tiles) into an equivalent tiling.  Notice here
that, by Definition \ref{def:hypothesis}, it holds that
\begin{displaymath}
\Omega_{\mathcal{T}_u+ \mathcal{T}_{H_1} } \subseteq \Omega_{\mathcal{T}_u+ \mathcal{T}_{H_2}},
\end{displaymath}
and hence the comparison becomes equivalent to the formulation
provided in the introduction.  Recall that we can draw samples from
these distributions as described in Section~\ref{sec:bg}.  From now
on, we use the term \emph{hypothesis pair}~$\mathcal{H}$ to denote
\begin{equation*}
  \mathcal{H} =\langle \mathcal{T}_u + \mathcal{T}_{H_1}, \mathcal{T}_u + \mathcal{T}_{H_2}\rangle,
\end{equation*}
where $ \mathcal{T}_u$ is the tiling formalising the (current)
background distribution, and $\mathcal{T}_{H_1}$ and
$\mathcal{T}_{H_2}$ form a pair of hypothesis tilings as defined in
Definition \ref{def:hypothesis}.  See Figure~\ref{fig:hypothesestiles}
for examples of different cases in which a pair of hypothesis tilings
and the user's knowledge are used to explore relations between
attributes. Here, the first case demonstrates a scenario in which the
user is interested in all relations, and hence the set of relations
that are of no interest to the user ${\cal{T}}_{H'}$ is
empty. Furthermore, the user has no prior knowledge, that is,
${\cal{T}}_{u}=\emptyset$.  The second case is similar, but here the
tile shown with an orange dashed border represents the user's
knowledge ${\cal{T}}_{u}\ne\emptyset$. Finally, the third case shows a
scenario in which ${\cal{T}}_{H'}\ne\emptyset$. The relations
preserved between attributes $B$ and $C$ for items $\{3,4,5,6\}$ are
of no interest to the user, while the relations between the attribute
groups $\{A\}$ and $\{B,C\}$ are.

\subsection{Finding Views}
\label{sec:view}
We are now ready to formulate our second main problem, that is, given
the uniform distributions from the two data sets characterised by the
hypothesis pair $\mathcal{H}$, how can we find an \emph{informative
  view} of the data maximally contrasting these? The answer to this
question depends both on the type of data and the selected
visualisation. For example, visualisations or measures of difference
are different for categorical and real-valued data. The
\emph{real-valued data} discussed in this paper allows us to use
projections (such as principal components) that mix attributes.
\begin{problem}[Comparing hypotheses]
\label{prob:comparinghypotheses}
Given the  two uniform distributions characterised by the hypothesis pair
$\mathcal{H} =\langle \mathcal{T}_u + \mathcal{T}_{H_1}, \mathcal{T}_u
+ \mathcal{T}_{H_2}\rangle$,
find the projection in which the distributions differ the
most.
\end{problem}

To solve this problem, we devise a \emph{linear projection pursuit
  method} which finds the direction in which the two distributions
differ the most in terms of \emph{variance}. In principle some other
difference measure could be used instead. However, a variance-based
measure can be implemented efficiently, which is one essential
requirement for interactive use.  Furthermore, using variance leads to
the convenient property that our projection pursuit method reduces to
standard principal component analysis (PCA) when the user has no
background knowledge and when the hypotheses are most general, as
shown in Theorem~\ref{thm:pca} below.

Thus, we formalise the optimisation criterion in
Problem~\ref{prob:comparinghypotheses} by defining a measure using
variance. Specifically, we choose the following form for our
\emph{gain function}:
\begin{equation}\label{eq:g}G(v,\mathcal{H})=\frac{v^T\Sigma_1v}{v^T\Sigma_2v},\end{equation}
where $v$ is a vector in $\mathbb{R}^m$ and $\Sigma_1$ and $\Sigma_2$
are the covariance matrices of the uniform distributions from
the data sets in $\Omega_{\mathcal{T}_u + \mathcal{T}_{H_1}}$ and
 $\Omega_{\mathcal{T}_u
+ \mathcal{T}_{H_2}}$, respectively. Then, the direction in which the
distributions differ most  in terms of the variance, that is, the solution to
Problem~\ref{prob:comparinghypotheses}, is given by
\begin{equation}
\label{eq:gain}
v_{\mathcal{H}}=\argmax_{v \in \mathbb{R}^m} G(v,\mathcal{H}).
\end{equation}

In order to solve Problem~\ref{prob:comparinghypotheses}, we first
show that the covariance matrix $\Sigma$ for a distribution defined
using the permutation-based scheme with tile constraints can be
computed analytically.
\begin{theorem}
Given
$j,j'\in[m]$, the covariance of attributes $\mathrm{cov}(j,j')$
from the uniform
distribution of data sets $\Omega_\mathcal{T}$ defined by a tiling $\mathcal{T}$
is given by
$\mathrm{cov}(j,j')=\sum\nolimits_{i=1}^n{a_j(i)a_{j'}(i)}/n$, where
\begin{equation*}
a_l(i)=\left\{
\begin{array}{ll}
Y(i,l),                                      &  i \in  R_{j,j'} \\
\sum\nolimits_{k\in R(i,l)}{Y(k,l)}/|R(i,l)|,  & i \notin  R_{j,j'}
\end{array}
\right.
\end{equation*}
and $l\in\{j,j'\}$. Here, $R_{j,j'}=\{i\in[n] \mid \exists (R,C) \in
\mathcal{T} \textrm{ where } i\in R \textrm{ and } j,j'\in C\}$
denotes the set of rows permuted together,
$Y(i,l)=X(i,l)-\sum\nolimits_{i=1}^n{X(i,l)}/n$ denotes the centred
data matrix, and $R(i,l)\subseteq[n]$ denotes a set satisfying
$\exists C\subseteq[m]$ where $(R(i,l),C)\in\mathcal{T}$, $i\in
R(i,l)$ and $l\in C$, that is, the rows in a tile that the data point
$X(i,l)$ belongs to.
\end{theorem}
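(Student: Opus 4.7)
The plan is to rewrite the sample covariance in terms of centred data, take expectation over a carefully-factorised uniform distribution on allowed permutation vectors, and then split the sum by the condition $i \in R_{j,j'}$.

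First I would observe that every allowed $\Pi$ permutes each column, so the column means are preserved: $\sum_i \widehat X(i,l)=\sum_i X(i,l)$. Consequently $\widehat Y(i,l):=\widehat X(i,l)-\bar X_l = Y(\pi_l(i),l)$ and
$$\widehat{\mathrm{cov}}(j,j') \;=\; \tfrac{1}{n}\sum_{i=1}^n \widehat X(i,j)\widehat X(i,j') - \bar X_j\bar X_{j'} \;=\; \tfrac{1}{n}\sum_{i=1}^n Y(\pi_j(i),j)\,Y(\pi_{j'}(i),j').$$
Taking expectation over $\Pi$ reduces the theorem to computing $E[Y(\pi_j(i),j)\,Y(\pi_{j'}(i),j')]$ and summing over $i$.

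The key structural step is to express the uniform distribution on allowed permutation vectors as a product measure. Since $\mathcal{T}$ is a tiling (non-overlapping), I would argue that sampling $\Pi$ uniformly is equivalent to drawing, independently: (i) for each tile $(R,C)\in\mathcal{T}$ a uniform permutation $\sigma_{(R,C)}$ of $R$, applied in every column $l\in C$; and (ii) for each column $l$, a uniform permutation $\tau_l$ of the rows not covered by any tile containing $l$. This factorisation is what will make the covariance computation local.

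Now I would split $\sum_i$ into $i\in R_{j,j'}$ and $i\notin R_{j,j'}$. For $i\in R_{j,j'}$, non-overlap gives a unique tile $(R,C)\ni i$ with $\{j,j'\}\subseteq C$, and there $\pi_j(i)=\pi_{j'}(i)=\sigma_{(R,C)}(i)$. Because $\sigma_{(R,C)}$ is a bijection on $R$, the per-tile sum is deterministically a re-indexing:
$$\sum_{i\in R} Y(\sigma_{(R,C)}(i),j)\,Y(\sigma_{(R,C)}(i),j') \;=\; \sum_{k\in R} Y(k,j)\,Y(k,j') \;=\; \sum_{i\in R} a_j(i)\,a_{j'}(i),$$
which matches the first branch of the $a_l$ definition after aggregation (even though individual $i$-terms are only equal after summation). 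For $i\notin R_{j,j'}$, no single tile contains both $(i,j)$ and $(i,j')$, so $\pi_j(i)$ and $\pi_{j'}(i)$ depend on disjoint subsets of the independent factors $\{\sigma_{(R,C)},\tau_l\}$ and are therefore independent. Each $\pi_l(i)$ for $l\in\{j,j'\}$ is uniform on its orbit $R(i,l)$, so $E[Y(\pi_l(i),l)]=\tfrac{1}{|R(i,l)|}\sum_{k\in R(i,l)} Y(k,l)=a_l(i)$, and by independence the product factorises into $a_j(i)a_{j'}(i)$. Combining the two cases yields $\mathrm{cov}(j,j')=\tfrac{1}{n}\sum_i a_j(i)a_{j'}(i)$.

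The main obstacle I expect is the product-measure claim: verifying that the global uniform distribution on allowed permutation vectors really decomposes into independent tile-level and column-level uniform permutations, and in particular that for $i\notin R_{j,j'}$ there is no residual coupling between $\pi_j(i)$ and $\pi_{j'}(i)$ through a common tile containing either $j$ or $j'$ but not both. I would also briefly address a minor notational gap, namely that the statement defines $R(i,l)$ only when $(i,l)$ sits in some tile; to use the formula uniformly I would treat the free rows of each column $l$ as an implicit trivial tile, so that $R(i,l)$ is always well-defined and the independence/uniformity argument applies without case distinction.
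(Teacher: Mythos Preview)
Your proposal is correct and follows essentially the same route as the paper: write the covariance as $E\!\left[\tfrac{1}{n}\sum_i Y(\pi_j(i),j)Y(\pi_{j'}(i),j')\right]$, split the sum on $i\in R_{j,j'}$ versus $i\notin R_{j,j'}$, use $\pi_j(i)=\pi_{j'}(i)$ and a reindexing on the first part, and use independence plus $E[Y(\pi_l(i),l)]=a_l(i)$ on the second. The paper's proof is terser and simply asserts the independence for $i\notin R_{j,j'}$, whereas you supply the product-measure factorisation and the implicit-tile convention for uncovered rows; these additions fill in exactly the steps the paper leaves to the reader rather than constituting a different argument.
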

\begin{proof}
  The covariance is defined by
  $$\mathrm{cov}(j,j')=
  E\left[\sum\nolimits_{i=1}^n{Y(\pi_j(i),j)Y(\pi_{j'}(i),j')}/n \right],$$
  where the expectation $E[\cdot]$ is defined over the
  permutations $\pi_j\in{\cal P}^n$ and $\pi_{j'}\in{\cal P}^n$ of columns
  $j$ and $j'$ allowed by the tiling $\mathcal{T}$, respectively. The part of the sum for rows permuted
  together $R_{j,j'}$ reads
  \begin{equation*}
  \sum\nolimits_{i\in R_{j,j'}}{E\left[Y(\pi_j(i),j)Y(\pi_{j'}(i),j')\right]}/n =
  \sum\nolimits_{i\in R_{j,j'}}{Y(i,j)Y(i,j')}/n,
  \end{equation*}
where we have used $\pi_j(i)=\pi_{j'}(i)$ and reordered the sum for
$i\in R_{j,j'}$. The remainder of the sum reads
   \begin{equation*}\sum\nolimits_{i\in
    R_{j,j'}^c}{E\left[Y(\pi_j(i),j)Y(\pi_{j'}(i),j')\right]}/n=
\sum\nolimits_{i\in
  R_{j,j'}^c}{E\left[Y(\pi_j(i),j)\right]E\left[Y(\pi_{j'}(i),j')\right]}/n,
   \end{equation*}
where $R_{j,j'}^c=[n]\setminus R_{j,j'}$ and the expectations have
been taken independently, because the rows in $R_{j,j}^c$ are permuted
independently at random. The result then follows from the observation
that $E\left[Y(\pi_l(i),l)\right]=a_l(i)$ for any $i\in
R_{j,j'}^c$.\footnote{We have also verified experimentally that the
  analytically derived covariance matrix matches the covariance matrix
  estimated from a sample from the distribution.}
\end{proof}

The direction in which the ratio of the variances is the largest can
now be found by applying a whitening operation
\citep{doi:10.1080/00031305.2016.1277159} on $\Sigma_2$.  The idea of
whitening is to find a \emph{whitening matrix} $W$ such that
$W^T\Sigma_2W=I$.  Using this transformation in Equation~\eqref{eq:g}
makes the denominator constant, and we hence obtain the solution to
the optimisation in Equation~\eqref{eq:gain} by finding the principal
components of $\Sigma_1$ transformed using $W$.

\begin{theorem}
\label{thm:opt}
The solution to the optimisation problem of Equation~\eqref{eq:gain}
is given by $v_{\mathcal{H}}=Ww$, where $w$ is the first principal
component of $W^T\Sigma_1W$ and $W\in{\mathbb{R}}^{m\times m}$ is a
whitening matrix such that $W^T\Sigma_2W=I$.
\end{theorem}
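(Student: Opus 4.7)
The plan is to reduce the generalized Rayleigh quotient in Equation~\eqref{eq:g} to a standard Rayleigh quotient (a PCA problem) via the change of variables $v = Ww$. Since $\Sigma_2$ is a covariance matrix and (assuming it is positive definite) $W$ satisfies $W^T \Sigma_2 W = I$, the whitening matrix $W$ is a nonsingular $m \times m$ matrix. Thus the map $w \mapsto Ww$ is a bijection on $\mathbb{R}^m \setminus \{0\}$, so optimizing over $v$ and over $w$ yields the same set of stationary values of $G$.

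Substituting $v = Ww$ into the gain function and using $W^T \Sigma_2 W = I$ gives
\begin{equation*}
G(Ww, \mathcal{H}) \;=\; \frac{w^T W^T \Sigma_1 W w}{w^T W^T \Sigma_2 W w} \;=\; \frac{w^T (W^T \Sigma_1 W) w}{w^T w}.
\end{equation*}
The right-hand side is the standard Rayleigh quotient of the symmetric matrix $W^T \Sigma_1 W$. By the Courant--Fischer characterization, its maximum over $w \in \mathbb{R}^m \setminus \{0\}$ is attained at the eigenvector corresponding to the largest eigenvalue, i.e., at the first principal component $w$ of $W^T \Sigma_1 W$. Transforming back via $v_\mathcal{H} = Ww$ yields the claimed solution.

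Two small subtleties are worth flagging. First, the gain function is scale-invariant ($G(\alpha v, \mathcal{H}) = G(v, \mathcal{H})$ for any $\alpha \neq 0$), so the optimizer is unique only up to scaling; this matches the fact that principal components are defined up to sign and normalization. Second, the whole derivation assumes $\Sigma_2$ is invertible so that a whitening matrix $W$ with $W^T \Sigma_2 W = I$ exists. The main obstacle to highlight is therefore the degenerate case: if the constraints in $\mathcal{T}_u + \mathcal{T}_{H_2}$ eliminate all variability along some direction, $\Sigma_2$ becomes singular and $W$ does not exist in the strict sense. One would then restrict attention to the range of $\Sigma_2$ (using a pseudo-whitening on that subspace), or regularize $\Sigma_2$ by adding $\epsilon I$, after which the same argument applies.
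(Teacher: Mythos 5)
Your proposal is correct and follows essentially the same route as the paper's own proof: substitute $v = Ww$, use $W^T\Sigma_2 W = I$ to reduce the gain to a standard Rayleigh quotient of $W^T\Sigma_1 W$, and identify the maximiser with its first principal component. Your added remarks on the nonsingularity of $W$ (so that optimising over $w$ covers all $v$), scale invariance, and the singular-$\Sigma_2$ case make explicit points the paper leaves implicit, but they do not change the argument.
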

\begin{proof}Using $v=Ww$ the gain in Equation~\eqref{eq:g} can be rewritten as
  \begin{align}
    G(Ww,\mathcal{H})&=\frac{w^TW^T\Sigma_1Ww}{w^TW^T\Sigma_2Ww}
    =\frac{w^TW^T\Sigma_1Ww}{w^Tw}.
     \label{eq:Wv}
  \end{align}
Equation \eqref{eq:Wv} is maximised when $w$ is the maximal variance
direction of $W^T\Sigma_1W$, from which it follows that the solution
to the optimisation problem of Equation \eqref{eq:gain} is given by
$v_{\mathcal{H}}=Ww$, where $w$ is the first principal component of
$W^T\Sigma_1W$.
\end{proof}
\paragraph{Note} In visualisations
(that is, when making two-dimensional scatterplots), we project the
data onto the {\em first two principal components}, instead of
considering only the first component as above. \\

Finally, we are ready to show that at the limit of no background
knowledge and with the most general hypotheses, our method reduces to
PCA of the correlation matrix.
\begin{theorem}
\label{thm:pca}
In the special case of the first step in unguided data exploration,
that is, comparing distributions from a hypotheses pair specified by
$\mathcal{H} =\langle \emptyset+\mathcal{T}_{H_1}, \emptyset+
\mathcal{T}_{H_2}\rangle$, where $\mathcal{T}_{H_1}=\{([n],[m])\}$ and
$\mathcal{T}_{H_2}=\cup_{j=1}^m\{([n],\{j\})\}$, the solution to
Equation~\eqref{eq:gain} is given by the first principal component of
the correlation matrix of the data, when the data attributes have been scaled to
unit variance.
\end{theorem}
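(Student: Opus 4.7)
The plan is to apply the analytical covariance formula from the preceding theorem to compute $\Sigma_1$ and $\Sigma_2$ explicitly for the two hypothesis tilings, and then invoke Theorem~\ref{thm:opt} with a convenient choice of whitening matrix.

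First I would compute $\Sigma_1$ for $\mathcal{T}_{H_1}=\{([n],[m])\}$. Since this single tile contains every row and every column, for any $j,j'\in[m]$ we have $R_{j,j'}=[n]$. Hence the ``non-permuted'' branch of $a_l(i)$ applies throughout, giving $\mathrm{cov}(j,j')=\sum_{i=1}^n Y(i,j)Y(i,j')/n$, i.e.\ $\Sigma_1=\tfrac{1}{n}Y^TY$, the standard sample covariance matrix of the centred data.

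Next I would compute $\Sigma_2$ for $\mathcal{T}_{H_2}=\cup_{j=1}^m\{([n],\{j\})\}$. For the diagonal $j=j'$, the tile $([n],\{j\})$ gives $R_{j,j}=[n]$, so $\mathrm{cov}(j,j)=\sum_i Y(i,j)^2/n=\sigma_j^2$, the sample variance of column $j$. For $j\neq j'$ there is no tile containing both columns, so $R_{j,j'}=\emptyset$ and the ``permuted independently'' branch applies with $R(i,l)=[n]$, giving $a_l(i)=\sum_{k=1}^n Y(k,l)/n=0$ because $Y$ is centred; hence $\mathrm{cov}(j,j')=0$. Thus $\Sigma_2=\mathrm{diag}(\sigma_1^2,\ldots,\sigma_m^2)$.

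With $\Sigma_2$ diagonal, the natural whitening matrix is $W=\Sigma_2^{-1/2}=\mathrm{diag}(1/\sigma_1,\ldots,1/\sigma_m)$, which clearly satisfies $W^T\Sigma_2W=I$. Computing $W^T\Sigma_1W$ entrywise gives $\Sigma_1(j,j')/(\sigma_j\sigma_{j'})$, which is precisely the sample correlation matrix $\Sigma_{\mathrm{corr}}$ of the data, and equivalently the covariance matrix of the data after each column has been rescaled to unit variance. By Theorem~\ref{thm:opt}, the optimiser of Equation~\eqref{eq:gain} is $v_{\mathcal{H}}=Ww$ where $w$ is the first principal component of $W^T\Sigma_1W=\Sigma_{\mathrm{corr}}$. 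Since the coordinate transformation $v\mapsto Ww$ is exactly the change of variables that rescales each attribute to unit variance, $v_{\mathcal{H}}$ corresponds to the first principal component of the correlation matrix of the unit-variance-scaled data, as claimed. The only non-routine step is the vanishing of the off-diagonals of $\Sigma_2$, which rests on the observation that independently permuted centred columns have zero expected cross-products; everything else is a direct substitution into the formulas from Theorems on covariance and optimisation.
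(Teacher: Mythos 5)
Your proposal is correct and follows essentially the same route as the paper's proof: establish that $\Sigma_2$ is diagonal (off-diagonals vanish because independently permuted centred columns have zero expected cross-products), identify $\Sigma_1$ with the sample covariance of the data, and conclude via Theorem~\ref{thm:opt}. The only difference is that you work out the covariance entries explicitly and keep general variances with $W=\Sigma_2^{-1/2}$ before specialising, whereas the paper uses the unit-variance scaling assumption up front so that $\Sigma_2=I$ and $W=I$ immediately.
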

\begin{proof}
The proof follows from the observations that for $\mathcal{T}_{H_2}$
the covariance matrix $\Sigma_2$ is a diagonal matrix (here a unit
matrix), resulting in the whitening matrix $W=I$. For this pair of
hypothesis, $\Sigma_1$ denotes the covariance matrix of the original
data. The result then follows from Theorem~\ref{thm:opt}.
\end{proof}

Once we have defined the most informative projection, which displays
the directions in which the distributions parametrised by the
hypothesis pair $\mathcal{H}$ differ the most, we can show the
original data in this projection. This allows the user to observe
different patterns, for example, a clustered set of points, a linear
relationship, or a set of outlier points.  We note that it would also
be possible to show and compare samples from the two distributions
characterised by the hypothesis pair $\mathcal{H}$ in the most
informative view.  In \citet{ecml-pkdd:tiler} we presented a
proof-of-concept tool using which the user can, in fact, toggle
between showing the data and samples from the two distributions
representing the hypotheses. This can potentially shed some further
light into why this particular view is interesting, but as we are
mostly interested in the relations present in the actual data, we have
chosen for simplicity to consider only the data in the most
informative projection in this work.

\subsection{Selecting Attributes for a Tile Constraint}
\label{sec:attributeselection}
After observing a pattern, the user can define a tile $(R,C)$ to be
added to $\mathcal{T}_u$. The set of data points $R$ included in the
pattern can be easily selected from the projection shown. For
selecting the attributes characterising the pattern, we use a
procedure where for each attribute the ratio between the standard
deviation of the attribute for the selection and the standard
deviation of all data points is computed. If this ratio is below a
threshold value $\tau$ (for example, $\tau=0.5$), then the attribute
is included in the set of attributes $C$ characterising the
pattern. The intuition here is that we are looking for attributes in
which the selection of points are more similar to each other than is
expected based on the whole data. Thus, the set of attributes for
which the user's knowledge of dependencies is included, is affected by
the choice of $\tau$. A smaller value of $\tau$ will only include
attributes for which the selection of points is very similar, whereas
a larger value of $\tau$ will include a larger set of attributes to
the tile constraint. A parallel coordinates plot ordered according to
the ratio of standard deviations can be useful in deciding a suitable
value for $\tau$, see Section \ref{ssec:german} for examples in which
we use parallel coordinate plots.

\begin{figure}
\begin{tabular}{c@{}c@{}c}
\includegraphics[width=0.325\textwidth, trim = 8mm 8mm 8mm 8mm, clip]{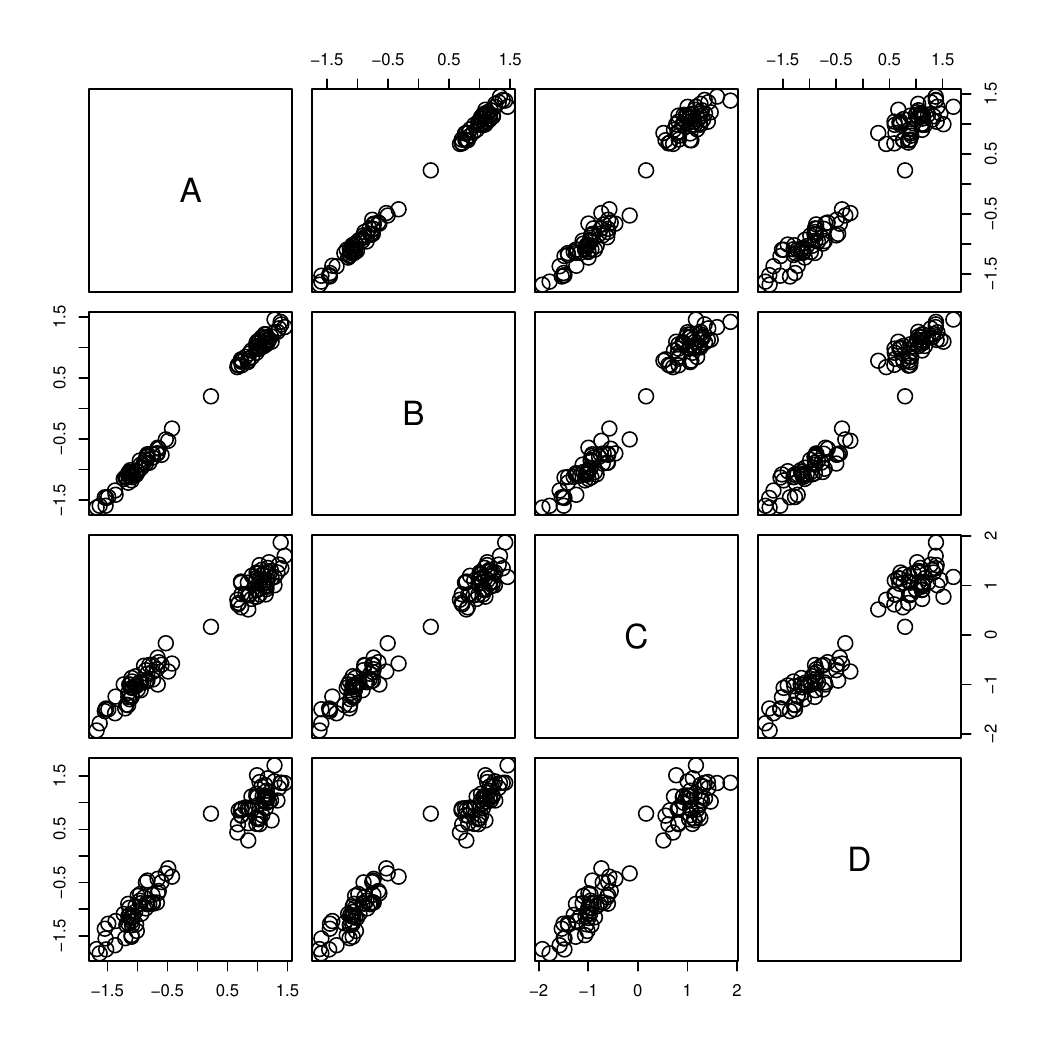} &
\includegraphics[width=0.325\textwidth, trim = 8mm 8mm 8mm 8mm, clip]{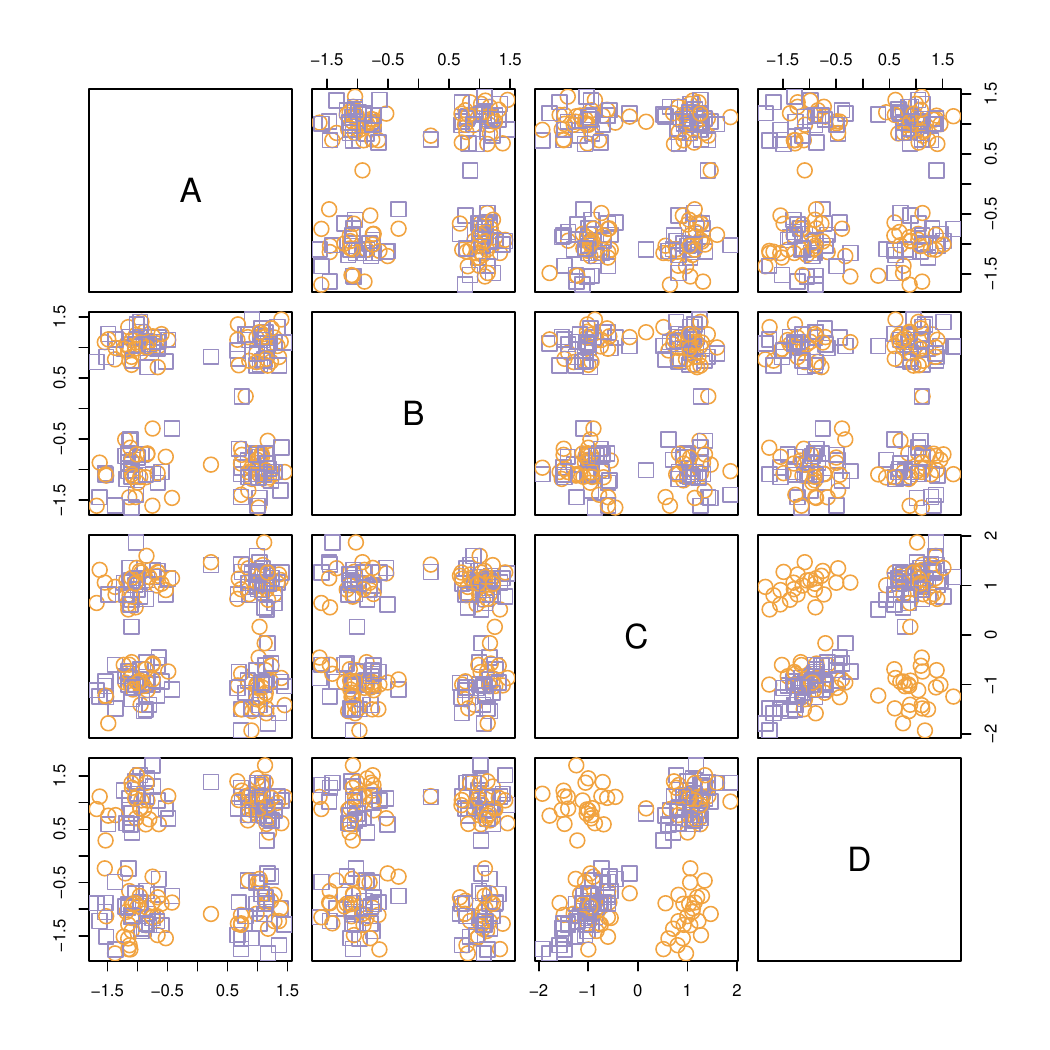} &
\includegraphics[width=0.325\textwidth, trim = 8mm 8mm 8mm 8mm, clip]{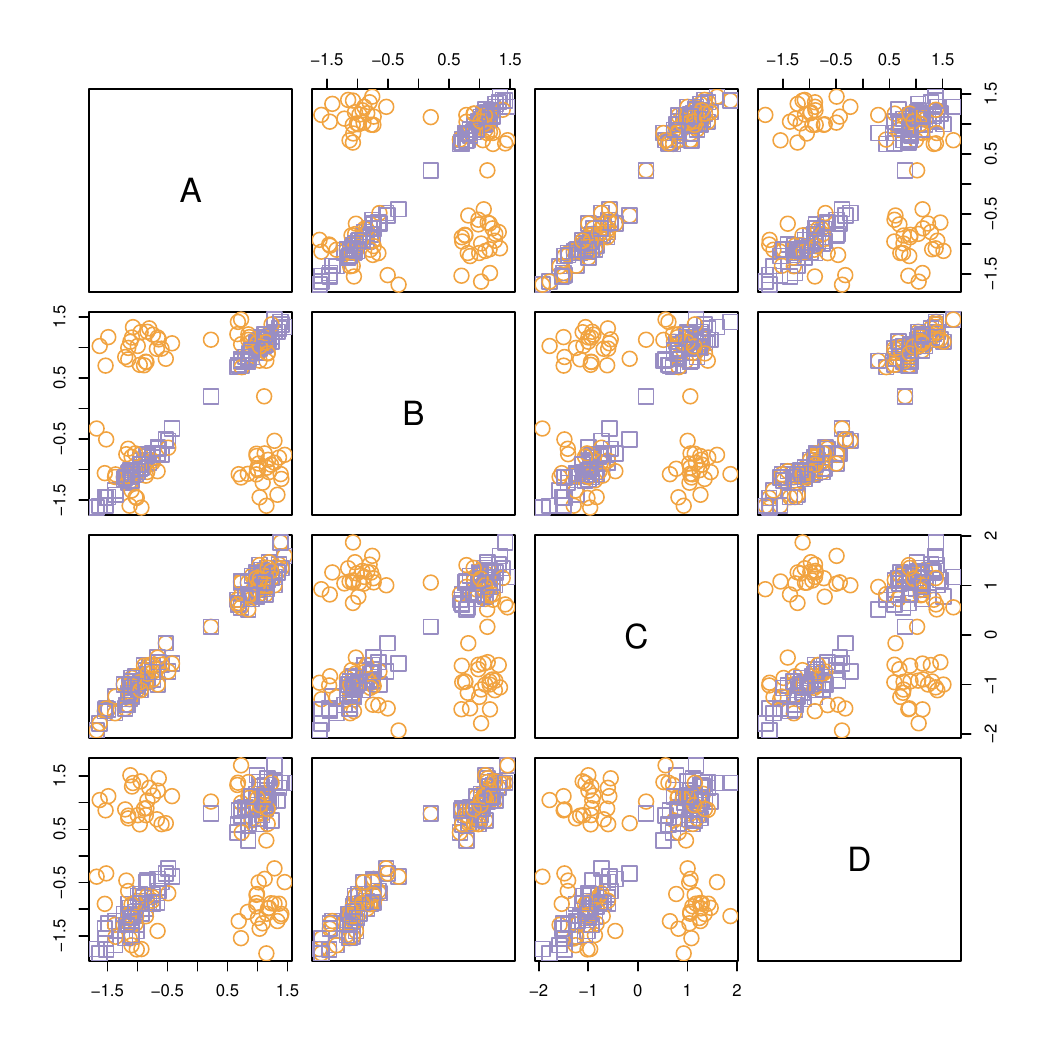} \\
 (a) & (b) & (c)\\
\end{tabular}
\caption{\label{fig:toy} (a) Scatterplot matrix of the toy data
  set. (b) Fully randomised data (orange circles) modelling the user's
  knowledge and data (purple squares) where only the relation between
  attributes $C$ and $D$ has been preserved modelling what the user
  could learn of the relations between attributes $C$ and $D$. (c) As
  in (b), but additionally the relation between attributes $A$ and $C$, as
  well as between attributes $B$ and $D$, has been preserved,
  modelling the user's knowledge of the relations between $A$ and $C$ as
  well as $B$ and $D$, respectively.}
\end{figure}

\subsection{Example: Iris Data}\label{sec:iris}

The {\sc iris} flower dataset is a well-known machine learning dataset. It consists of  four measurements on 150 iris flowers from three species \citep{iris}. In this section we discuss, for illustration, what the operations shown in Figure \ref{fig:hypothesestiles} could mean in terms of the {\sc iris} data.

Assume that the four attributes $A$--$D$ in Figure \ref{fig:hypothesestiles} are the four measurements: {\em petal length} ($A$), {\em petal width} ($B$), {\em sepal length} ($C$), and {\em sepal width} ($D$). Further assume that rows 1--2 correspond to the species {\em setosa}, rows 3--4 to the species {\em versicolor}, and rows 5--6 to the species {\em virginica},\footnote{Notice that we actually have 50 (not 2) specimen per species! We compute the projections on the full {\sc iris} data.} and that the attributes have been scaled to zero mean and unit variance.

Case 1 of Figure \ref{fig:hypothesestiles} corresponds to the PCA projection of the {\sc iris} data, as stated by Theorem \ref{thm:pca}. The first PCA component, given by Equation \eqref{eq:gain}, is $v_{\mathcal{H}}=-0.58\times A-0.56\times B-0.52\times C+0.26\times D$, implying that the overall correlations of the data are best described by (roughly) the average petal length ($A$), petal width ($B$), and sepal length ($C$).

In Case 2 we assume that the user is aware of the high correlation of petal length ($A$) and petal width ($B$) for the species setosa and versicolor, which is expressed by the tile with the orange dashed border in Figure \ref{fig:hypothesestiles}. With this constraint, the direction provided by Equation \eqref{eq:gain} is given by $v_{\mathcal{H}}=-0.73\times A+0.11\times B-0.58\times C+0.34\times D$. The weight of the petal width is reduced---the user already knows about its correlation with petal length---and the weight of the other measurements is increased.

In Case 3 the user is interested only in the species versicolor and virginica and the correlation between petal length  ($A$) and petal width and sepal length ($B$--$C$). In this case the most informative direction given by Equation \eqref{eq:gain} is  $v_{\mathcal{H}}=0.50\times A-0.77\times B+0.39\times C+0\times D$. Sepal width ($D$) is no longer informative, because it is not connected to the hypothesis pair $\mathcal{H}$, and it appears in the projection with a zero weight.

Summarising, the projection therefore tends to emphasize more the directions that are most informative to the user, of which we show more examples in Section \ref{sec:experiments}.

\subsection{Example: Analysing Partitions of Data Items}

Partitioning of data items or subsets of data items can be found, e.g., by clustering algorithms, by using known categorical attributes, or by visual inspection, as we will later demonstrate in Sections \ref{ssec:german} and \ref{ssec:other}. Our approach is suitable for probing such subsets of data items and finding relations in data that are not explained by the given data subset.

At the simplest, if we are given a subset of data items of interest, we can use the hypothesis tiling of Definition \ref{def:hypothesis} to probe the subset of rows $R$ of interest. Another 
straightforward to include a subset of data items $R$ into the user's background information would be to add a tile $(R,[m])$ to $\mathcal{T}_u$, after which the system would ignore the data items in $R$. More nuanced examples of adding subsets of data points into background information are given in Section \ref{sec:experiments}.

\subsection{Example: Subsetting Loses Information}
We conclude this section with a simple example illustrating how
subsetting the data in order to focus on a specific objective can lead
to loss of information. With this example we wish to highlight two
aspects: (i) how the user's background knowledge and objectives affect
the views that are most informative, and (ii) how it can be
advantageous to investigate relations in the data as a whole instead
of using a subset of the data for the analysis.

We construct a toy data set with four attributes $A$, $B$, $C$, and
$D$ as follows. We first generate two strongly correlated attributes
$A$ and $B$, after which we generate attribute $C$ by adding noise to
$A$, and attribute $D$ by adding noise to $B$. This data set,
visualised in Figure~\ref{fig:toy}(a), is very simple and here it is
possible to investigate all pairwise relations in the data in one
view. This is in general not possible in any real analysis
scenarios. Furthermore, we assume that the user is interested in the
relation between attributes $C$ and $D$. Our goal is then to find a
maximally informative 1-dimensional projection of the data that takes
both this objective and the user's background knowledge into account.

First, let us assume that the user only knows the marginal
distribution of each attribute but is unaware of the relations between
the attributes. Using the approach in this paper we formulate this by
means of the hypothesis pair $\mathcal{H}_0 =\langle \mathcal{T}_{u_0}
+ \mathcal{T}_{H_1}, \mathcal{T}_{u_0} + \mathcal{T}_{H_2}\rangle$,
where $\mathcal{T}_{u_0}=\emptyset$,
$\mathcal{T}_{H_1}=\{(R,\{C,D\})\}$, $\mathcal{T}_{H_2}=\{(R,\{C\}),
(R,\{D\}) \}$, and $R = [n]$ (all rows in the data). A sample from
$\Omega_{\mathcal{T}_{u_0} + \mathcal{T}_{H_1}}$ is shown in
Figure~\ref{fig:toy}(b) using purple squares, and a sample from
$\Omega_{\mathcal{T}_{u_0} + \mathcal{T}_{H_2}}$ is shown using orange
circles. The orange distribution hence models what the user currently
knows and the purple what the user could optimally learn about the
relation between $C$ and $D$ from the data. The orange and purple
distributions differ the most in the plot $CD$, as expected, and
indeed the maximally informative 1-dimensional projection satisfying
Equation~\eqref{eq:gain}, is given by $v=0.7C+0.7D$.

Secondly, assume that, unlike above, the user is already aware of the
relationship between the attribute pairs $A$ and $C$ as well as $B$
and $D$, but does not know that attributes $A$ and $B$ are almost
identical. We proceed as above with the difference that we now add the
user's knowledge as a constraint to both the distributions.  This is
achieved by updating the hypothesis pair to $\mathcal{H}_1 =\langle
\mathcal{T}_{u_1} + \mathcal{T}_{H_1}, \mathcal{T}_{u_1} +
\mathcal{T}_{H_2}\rangle$, where $\mathcal{T}_{u_1}=\{(R,\{A,C\}),
(R,\{B,D\}) \}$ captures the user's knowledge.

Samples from the uniform distributions on the data sets conforming to
this hypothesis pair are shown in Figure~\ref{fig:toy}(c). Again, the
orange distribution models the user's knowledge (that is,
$\Omega_{\mathcal{T}_{u_1} + \mathcal{T}_{H_2}}$) and the purple what
the user could learn from the relation between $C$ and $D$ from the
data, given that the user already knows about the relationships of the
attribute pairs $A$ and $C$ as well as $B$ and $D$ (that is,
$\Omega_{\mathcal{T}_{u_1} + \mathcal{T}_{H_1}}$).  The orange and
purple distributions differ the most in the plot $AB$ and therefore
the user would gain most information if shown this view.  Indeed, the
most informative 1-dimensional projection satisfying
Equation~\eqref{eq:gain} is $v=-0.7A-0.7B$.  In other words, the
knowledge of the relation of $A$ and $B$ gives maximal information
about the relation of $C$ and $D$. This makes sense, because the
variables $C$ and $D$ are really connected via $A$ and $B$ through
their generative process.

This example hence shows how the background knowledge affects the
views. Also, if we had chosen a subset of the data containing, for
example, just attributes $C$ and $D$ we would not have observed the
connection of $C$ and $D$ through $A$ and $B$, even if we knew the
relation between $A$ and $C$ as well as $B$ and $D$. Thus, we have
demonstrated with this simple example that using hypothesis tilings as
above allows us to explore the entire data set at once while still
focusing on particular relations of interest.

\section{Experiments}
\label{sec:experiments}
In this section we first consider the stability and scalability of the
framework presented in this paper. After this, we present examples of
how the proposed method is used to explore relations in a data set and
to focus on investigating a hypothesis concerning relations in a
subset of the data. An open source library implementing the proposed
framework, including the code for the experiments presented in this
paper, is available from \url{https://github.com/edahelsinki/corand/}.

All the experiments were run on a MacBook Pro laptop with a
\unit[3.1]{GHz} Intel Core i5 processor using R version 3.5.2
\citep{Rproject}.

\subsection{Data Sets}
We use synthetic data in the scalability
experiment. We also use two real-world data sets to showcase the
applicability of our framework in human-guided data exploration.

The \textsc{german} socioeconomic data set \citep{boley2013,
  kang2016}\footnote{Available from
  \url{http://users.ugent.be/~bkang/software/sica/sica.zip}} contains
records from 412 German administrative districts. Each district is
represented by 46 attributes describing socioeconomic and political
aspects in addition to attributes such as the type of the district
(rural/urban), area name/code, state, region (East/West/North/South)
and the geographic coordinates of each district center. The {\em
  socioecologic attributes} include, for example, population density,
age and education structure, economic indicators (for example, GDP
growth, unemployment, income), and the proportion of the workforce in
different sectors. The {\em political attributes} include election
results of the five major political parties (CDU/CSU, SPD, FDP, Green,
and Left) in the German federal elections in 2005 and 2009, as well as
the voter turnout. For our experiments we exclude the election results
from 2005 (which are highly correlated with the 2009 election
results), all non-numeric variables, and the area code and coordinates
of the districts, resulting in 32 real-valued attributes (although we
use the full data set when interpreting the results). Finally, we
scale the real-valued variables to zero mean and unit variance.

The \textsc{accident} data set\footnote{Proprietary data obtained from
  the Finnish \emph{Workers' Compensation Center}
  \url{https://www.tvk.fi/}} is a random sample of 3000 accidents from
a large data set containing all occupational accidents in Finnish
enterprises during the period 2003--2014 reported to the Finnish
\emph{Workers' Compensation Center}. In the original data set, the
accidents are described by 37 variables, the majority of which are
categorical, including details about the victim (occupation, age, sex,
nationality) and the accident (geographical location, cause, type,
working process). We use \emph{one-hot encoding} to transform the categorical variables into
real-valued variables, creating a column
for every label of every variable in which the presence (absence) of a
label is indicated by 1 (0, respectively). To restrict the
dimensionality of the resulting encoding, we drop variables with a
very high number of labels; for example, the variable for the
municipality in which the accident happened has more than 300 labels
and would result in equally many columns in the data.  Variables with
many labels are implicitly given more weight in the one-hot encoding
as well. For instance, the attribute \textsc{sukup} (gender) has 2
labels, while the attribute \textsc{ruumis} (injured body part) has 68
labels. In the transformed data there are 2 columns for \textsc{sukup}
and 68 columns for \textsc{ruumis}, making the latter more strongly
represented in the data. This could impact further analysis, and to
overcome this effect, we scaled the binary data in groups, that is,
all columns that originate from the same variable are scaled to have a
total variance of 1. The resulting data set contains 3000 rows and 220
attributes.

\begin{table}[t]\centering
\centering
\begin{tabular}[t]{cccc}
\toprule
$\bm{\sigma}$ & $\bm{\Delta n=0}$ & $\bm{\Delta n=100}$ & $\bm{\Delta n=200}$\\
\midrule
$0$ & $0.000$  & $0.008$ &  $0.021$ \\
$1$ &  $0.049$ & $0.058$ &  $0.096$ \\
$2$ & $0.111$ & $0.144$ & $0.170$ \\
$5$ & $0.280$  &  $0.230$ &  $0.293$  \\
$10$ &  $0.358$ & $0.302$ & $0.308$ \\
\bottomrule
\end{tabular}
\caption{The mean relative error as a function of perturbance of the
  data. Here $\sigma$ is the variance of the noise added and $\Delta
  n$ denotes the number of rows randomly removed. The relative error is
  the difference in gain of Equation~\eqref{eq:g} between the
  optimal solution $v_{\mathcal{H}}$ and the solution
  $v_{\mathcal{H}}^*$ found on perturbed data divided by the gain in
  the optimal solution.}
\label{tab:r:a}
\end{table}

\begin{table}[t!]\centering
\centering
\begin{tabular}[t]{rrrr}
\toprule
$\bm{n}$ &
$\bm{m}$ &
$\bm{t_\mathrm{model}}$ \textbf{(s)} &
$\bm{t_\mathrm{view}}$ \textbf{(s)} \\
\midrule
$500$ & $10$ & $0.02$ & $0.01$ \\
$1000$ & $10$ & $0.04$ & $0.01$ \\
$5000$ & $10$ & $0.22$ & $0.04$ \\
$10000$ & $10$ & $0.53$ & $0.10$ \\\midrule
$500$ & $50$ & $0.06$ & $0.14$ \\
$1000$ & $50$ & $0.10$ & $0.20$ \\
$5000$ & $50$ & $0.41$ & $0.81$ \\
$10000$ & $50$ & $2.01$ & $1.65$ \\\midrule
$500$ & $100$ & $0.09$ & $0.48$ \\
$1000$ & $100$ & $0.14$ & $0.75$ \\
$5000$ & $100$ & $0.92$ & $3.32$ \\
$10000$ & $100$ & $3.40$ & $6.68$ \\\midrule
$500$ & $150$ & $0.13$ & $1.02$ \\
$1000$ & $150$ & $0.28$ & $1.65$ \\
$5000$ & $150$ & $1.58$ & $7.26$ \\
$10000$ & $150$ & $2.22$ & $15.20$ \\\midrule
$500$ & $200$ & $0.25$ & $1.74$ \\
$1000$ & $200$ & $0.40$ & $3.05$ \\
$5000$ & $200$ & $0.84$ & $13.08$ \\
$10000$ & $200$ & $6.67$ & $26.37$ \\
\bottomrule
\end{tabular}
\caption{Median wall clock running time for random data with varying
  number of rows ($n$) and columns ($m$) for a data set consisting of
  Gaussian random numbers. We give the time to add three random tiles
  plus hypothesis tiles ($t_\mathrm{model}$) and the time to find the
  most informative view ($t_\mathrm{view}$), that is, to solve
  Equation~\eqref{eq:gain}.}
\label{tab:r:b}
\end{table}

\subsection{Stability and Scalability}

We first study the sensitivity of the results with respect to noise or
missing data rows.  In this experiment we use the 32 real-valued
variables from the {\sc german} data together with three (non-trivial)
factors, namely {\em Type} (2 values), {\em State} (16 values), and
{\em Region} (4 values) to create synthetic data sets. A synthetic
data set, parametrised by the noise term $\sigma$ and an integer
$\Delta n$ is constructed as follows. First, we randomly remove
$\Delta n$ rows from the data, after which Gaussian noise with
variance $\sigma^2$ is added to the remaining variables, and finally
all variables are rescaled to zero mean and unit variance. We create a
random tile by randomly picking a factor that defines the rows in a
tile and then randomly sample 2 to 32 attributes as the columns.  The
background knowledge $\mathcal{T}_{u}$ consists of three such random
tiles.  The hypothesis tiles are constructed using one such random
tile $(R,C)$ as a basis: $\mathcal{T}_{H_1}=\{(R,C)\}$ and
$\mathcal{T}_{H_2}=\cup_{j\in C}{\{(R,\{j\})\}}$.

The results are shown in Table~\ref{tab:r:a}. We notice that the
method is relatively insensitive with respect to the gain in terms of
noise and removal of rows. Even removing about half of the rows
($\Delta n=200$) does not change the results meaningfully. Only a very
high degree of noise, corresponding to $\sigma\geq 5$ (that is, circa
5--10\% signal-to-noise ratio) substantially degrades the results.

Table~\ref{tab:r:b} shows the running time of the algorithm as a
function of the size of the data for Gaussian random data with a
similar tiling setup as used for the {\sc german} data. We make two
observations. First, the tile operations scale linearly with the size
of the data $nm$ and they are relatively fast. Most of the time is
spent on finding the views, that is, solving
Equation~\eqref{eq:gain}. Even our unoptimised pure R implementation
runs in seconds for data sets that are visualisable (having thousands
of rows and hundreds of attributes); any larger data set should in any
case be downsampled for visualisation purposes.

\begin{figure}[t]
\centering
\includegraphics[width=0.75\textwidth]{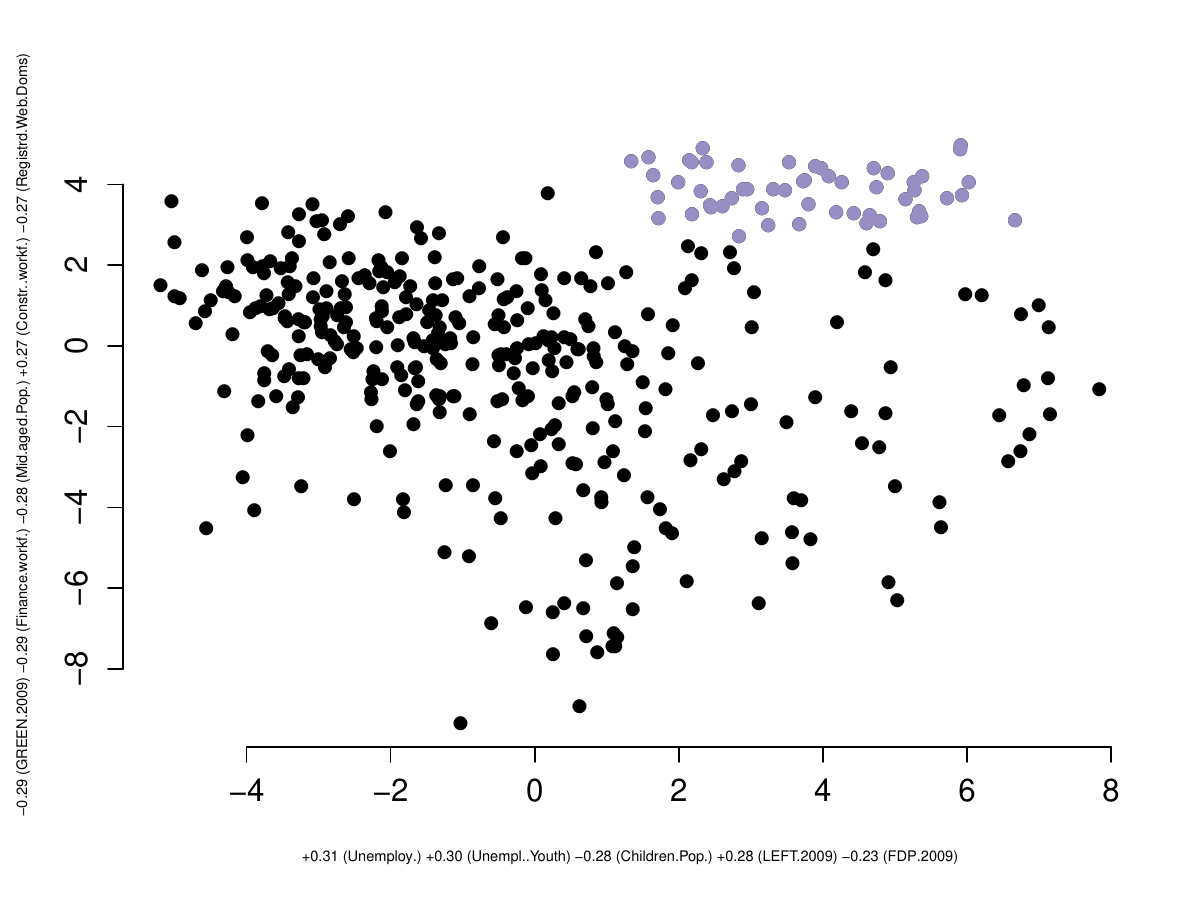}
\caption{The most informative view of the \textsc{german} data set
  with respect to the hypothesis pair $\mathcal{H}_{E,\emptyset}$.
  Black filled circles show the data points; the selected points ({\em
    Selection 1}) are marked with purple. The $x$ and $y$ axis labels
  show the five attributes with the largest absolute values in each
  projection vector.}
\label{fig:exploration1left}
 \end{figure}

\begin{figure}[t]
  \centering
  \includegraphics[width=0.62\columnwidth]{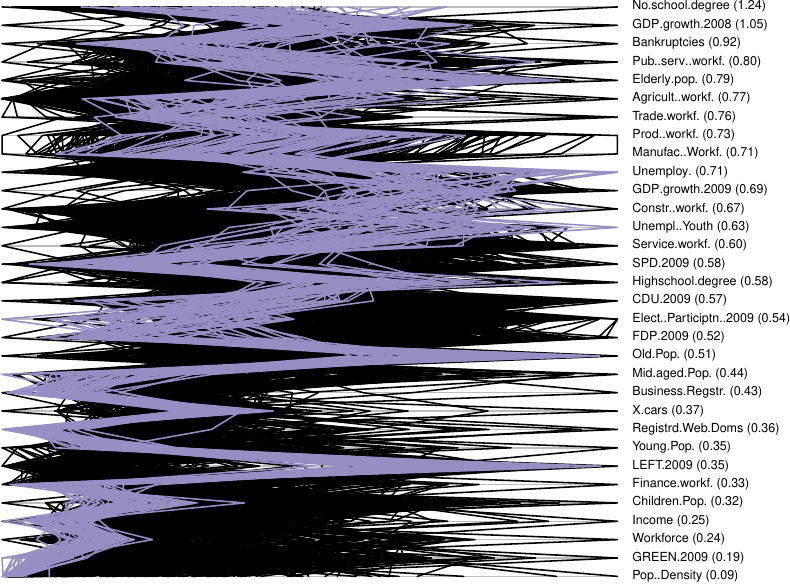}
  \caption{Parallel coordinates plot of the \textsc{german} data set
    with {\em Selection 1} of Figure  \ref{fig:exploration1left} highlighted in purple.
	Here each of the jigged vertical lines corresponds to one of the data items; the colours are as in Figure \ref{fig:exploration1left}. 
	Each of the horizontal lines corresponds to a data attribute. The values of the attributes have been shifted and scaled so that the minimum value of each attribute is at the left and the maximum value is at the right. The intersection of a horizontal line and a jigged vertical line (data item) corresponds to the value of the attribute for the particular data item. In the parenthesis we show the ratio between the standard deviation of the attribute for the selection (purple data items) and the standard deviation of the attribute for all data items, i.e., the parameter $\tau$ as defined Section \ref{sec:attributeselection}. The attributes have been ordered by decreasing $\tau$, after which the attributes that are most homogeneous for data items in this particular selection are at the bottom.}
    \label{fig:explore:pcp:c1}
 \end{figure}

\subsection{Exploration of the German Data Set}
\label{ssec:german}
Next, we demonstrate our  framework by
exploring the \textsc{german} data set under different objectives.

\paragraph{Exploration without prior background knowledge}
We start with {\em unguided data exploration} where we have no prior
knowledge about the data and our interest is as generic as possible.
In this case $\mathcal{T}_u=\emptyset$ and as the hypothesis tilings
we use $\mathcal{T}_{E_1}$, where all rows and columns belong to the
same tile (fully-constrained tiling), and $\mathcal{T}_{E_2}$, where
all columns form a tile of their own (fully unconstrained tiling). Our
hypothesis pair is then $\mathcal{H}_{E, \emptyset}=\langle\emptyset+
\mathcal{T}_{E_1}, \emptyset+\mathcal{T}_{E_2}\rangle$.

We then consider the view of the data
(Figure~\ref{fig:exploration1left}) which is maximally informative,
that is, in which the two distributions parametrised by the hypothesis
pair $\mathcal{H}_{E, \emptyset}$ differ the most. We observe that
there is some structure visible in this view. In order to investigate
the characteristics of the data points corresponding to different
patterns in the \textsc{german} data, we first choose to focus on the
set of points in the upper right corner, marked with purple in
Figure~\ref{fig:exploration1left}. Our selection, denoted by {\em
  Selection 1}, corresponds to rural districts in Eastern Germany (see
Table~\ref{tab:explore:clusters}). We also consider the parallel
coordinates plot of the data, shown in
Figure~\ref{fig:explore:pcp:c1}. This plot shows the 32 real-valued
attributes in the data. The currently selected points ({\em Selection
  1}) are shown in purple while the rest of the data is shown in
black. The number in parentheses following each variable name is the
ratio of the standard deviation of the selection and the standard
deviation of all data. If this number is small we can conclude that
the values for a particular attribute are homogeneous inside the
selection (behave similarly). Based on the parallel coordinates plot
in Figure~\ref{fig:explore:pcp:c1} we observe that there is little
support for the Green party and a high support for the Left party in
these districts.

\begin{table}[t]
\centering
  \begin{tabular}{lcccccc}
  \toprule
 &  \multicolumn{4}{c}{\textbf{Region}} &   \multicolumn{2}{c}{\textbf{Type}} \\
  \cmidrule(r){2-5} \cmidrule(l){6-7}
  \textbf{Selection} &  \textbf{North} & \textbf{South} & \textbf{West} & \textbf{East} & \textbf{Urban} & \textbf{Rural} \\
  \midrule
Selection 1 & 0 & 0 & 0 & 54 & 0 & 54 \\
Selection 2 & 10 & 7 & 21 & 22 & 60 & 0 \\
   \bottomrule
\end{tabular}
  \caption{Distribution of the {\em Region} and {\em Type} attributes
  for {\em Selection 1} and {\em Selection 2} in the \textsc{german} data
  set.}
  \label{tab:explore:clusters}
\end{table}

\begin{figure}[t]
\centering
\includegraphics[width=0.75\textwidth]{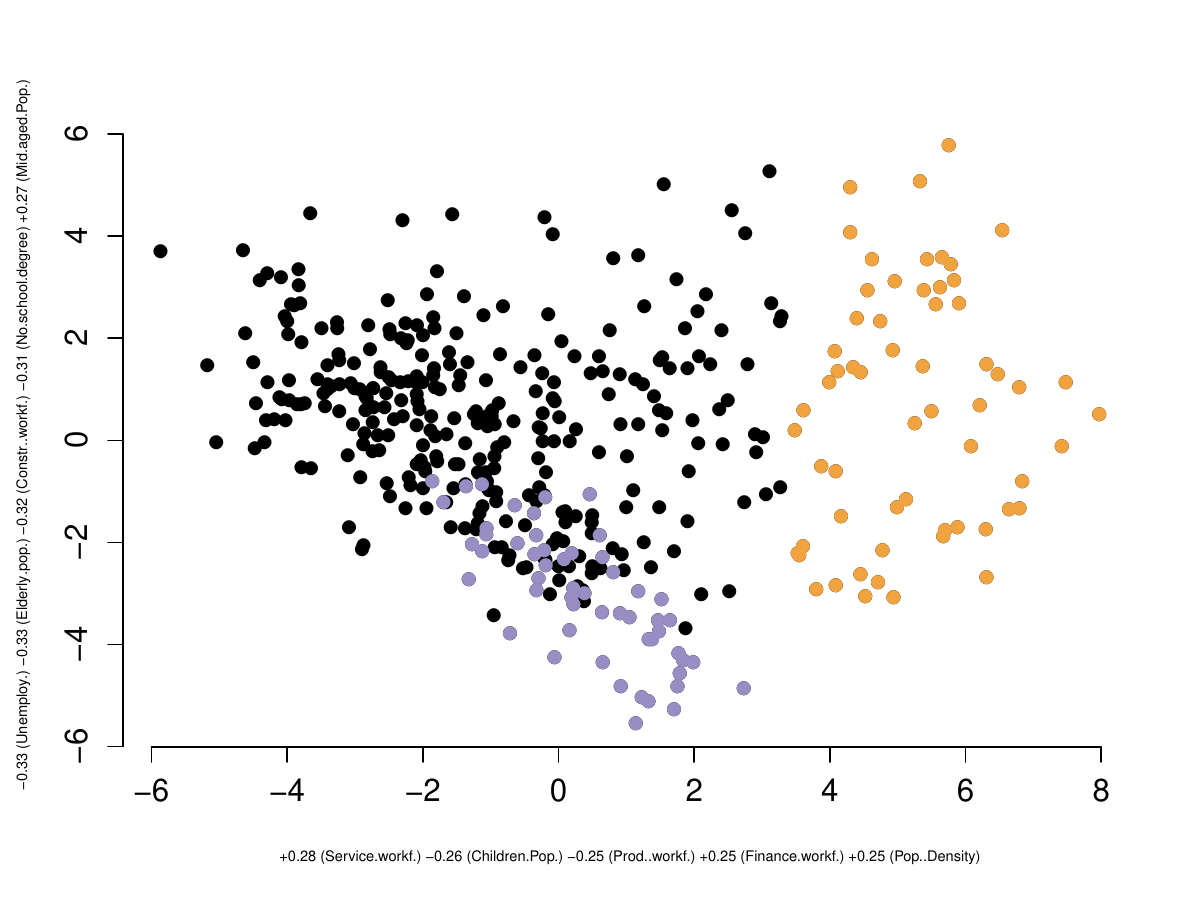}
\caption{The most informative view of the \textsc{german} data set
  with respect to $\mathcal{H}_{E,\{t\}}$. Black filled circles show
  the data points; the selected points, that is, {\em Selection 2} are
  shown in orange. The points corresponding to {\em Selection 1} from
  the previous exploration step are shown in purple. The $x$ and $y$
  axis labels show the five attributes with the largest absolute
  values in each projection vector.}
\label{fig:exploration1right}
 \end{figure}

  \begin{figure}[t]
  \centering
    \includegraphics[width=0.62\columnwidth]{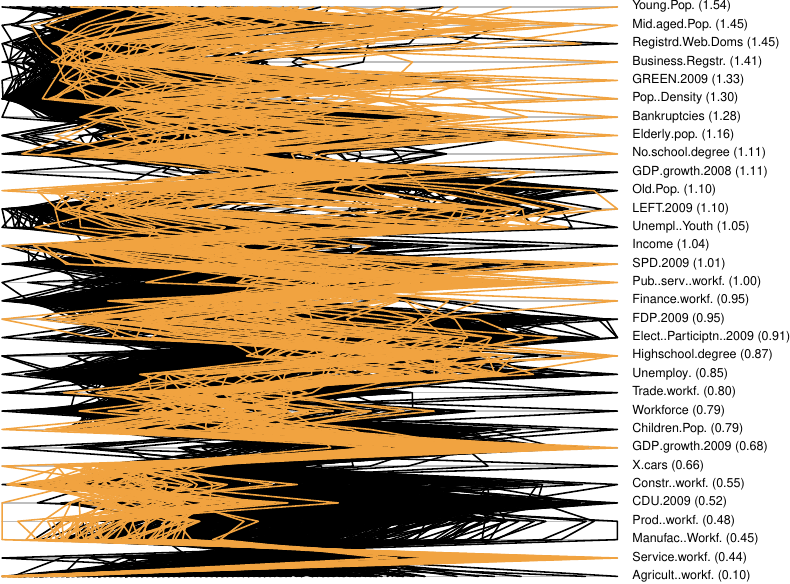}
  \caption{Parallel coordinates plot of the \textsc{german} data set
	  with {\em Selection 2} of Figure \ref{fig:exploration1right} highlighted in orange.
	  See the caption of Figure \ref{fig:explore:pcp:c1} for an explanation
	  of the semantics of the plot.}
    \label{fig:explore:pcp:c2}
 \end{figure}

We next add a tile constraint $t$ for the items in the observed
pattern where the columns (attributes) are chosen as described in
Section~\ref{sec:attributeselection} using a threshold value $\tau =
2/3$. Thus, we select those attributes for which the standard
deviation ratio, that is, the number in parentheses in
Figure~\ref{fig:explore:pcp:c1}, is below the threshold. The
hypothesis pair is then updated to take into account the newly added
tile, that is, we consider $\mathcal{H}_{E, \{t\}}=\langle \{t\}+
\mathcal{T}_{E_1}, \{t\}+\mathcal{T}_{E_2} \rangle$.

The most informative view displaying differences of the distributions
parametrised by $\mathcal{H}_{E, \{t\}}$ is shown in
Figure~\ref{fig:exploration1right}. Now, {\em Selection 1} (shown in
purple for illustration purposes) is no longer as clearly visible in
Figure~\ref{fig:exploration1right} as it is in the first view. This is
expected, since this pattern has been accounted for in the
distributions parametrised using $\mathcal{H}_{E, \{t\}}$.  We now
focus on investigating the sparse region of points shown in orange in
Figure~\ref{fig:exploration1right} (\emph{Selection 2}). By inspecting
the class attributes of this selection we learn that these items
correspond to urban districts (see Table~\ref{tab:explore:clusters})
in all regions. Based on the parallel coordinates plot shown in
Figure~\ref{fig:explore:pcp:c2} we conclude that these districts are
characterised by a low fraction of agricultural workforce and a high
amount of service workforce, both expected in urban districts. We also
notice that these districts have had a higher GDP growth in 2009 and
that it appears that the amount of votes for the CDU party in these
districts was quite low.

\paragraph{Exploration with a specific hypothesis}
Next, we focus on a more specific hypothesis involving only a subset
of rows and attributes. In particular, we want to investigate a
hypothesis concerning the relations between certain attribute groups
in {\em rural areas}. We hence define our hypothesis pair as
follows. As the subset of rows $R$ we choose {\em all 298 districts
  that are of the type rural}. We then consider a subset of the
attributes $C = C_1\cup C_2\cup C_3\cup C_4$ partitioned into four
groups. The first attribute group ($C_1$) consists of the voting
results for the political parties in 2009. The second attribute group
($C_2$) describes demographic properties such as the fraction of
elderly people, old people, middle aged people, young people, and
children in the population. The third group ($C_3$) contains
attributes describing the workforce in terms of the fraction of the
different professions such as agriculture, production, or service. The
fourth group ($C_4$) contains attributes describing the level of
education, unemployment and income. The attribute groupings are listed
in Table~\ref{tab:focus:cgs}. Thus, we here want to investigate
relations between different attribute groups, ignoring the relations
inside the groups.

We form the hypothesis pair $\mathcal{H}_{F, \emptyset}=\langle
\emptyset+\mathcal{T}_{F_1}, \emptyset+\mathcal{T}_{F_2} \rangle$,
where $\mathcal{T}_{F_1}$ consists of a tile spanning the rows in $R$
and the columns in $C$ whereas $\mathcal{T}_{F_2}$ consists of four
tiles: $t_i = (R, C_i)$, $i\in\{1,2,3,4\}$. Looking at the view in
which the distributions parametrised by the pair $\mathcal{H}_{F,
  \emptyset}$ differ the most, shown in
Figure~\ref{fig:exploration2}(a), we find two clear clusters
corresponding to a division of the districts into those located in the
East, and those located elsewhere. We could also have used our already
observed background knowledge of {\em Selection 1}, by considering the
hypothesis pair $\mathcal{H}_{F, \{t\}}=\langle
\{t\}+\mathcal{T}_{F_1}, \{t\}+\mathcal{T}_{F_2} \rangle$, where $t$
is the tile defined earlier for {\em Selection~1}.  For this
hypothesis pair, the most informative view is shown in
Figure~\ref{fig:exploration2}(b), which clearly is different to
Figure~\ref{fig:exploration2}(a), since we already were aware of the
relations concerning the rural districts in the East and this was
included in our background knowledge.

\begin{table}[t]
\centering
  \begin{tabular}{l l}
    \toprule
    \textbf{Group} & \textbf{Attributes} \\
\midrule \vspace{0.75ex}
$\bm{C_1}$  &  LEFT.2009, CDU.2009, SPD.2009, FDP.2009, GREEN.2009 \\ \vspace{0.75ex}
$\bm{C_2}$  & Elderly.pop., Old.Pop., Mid.aged.Pop., Young.Pop., Children.Pop.\\
$\bm{C_3}$  & Agricult..workf., Prod..workf.,  Manufac..Workf., Constr..workf., \\ \vspace{0.75ex}
            & Service.workf., Trade.workf.,  Finance.workf., Pub..serv..workf. \\
$\bm{C_4}$  & Highschool.degree, No.school.degree,  Unemploy., Unempl..Youth, Income\\
\bottomrule
  \end{tabular}
  \caption{Column groups in the focus tile in the exploration of the  \textsc{german} data set.}
  \label{tab:focus:cgs}
\end{table}

  \begin{figure}[t]
 \centering
\begin{tabular}{c@{}c}
\includegraphics[width=0.49\textwidth]{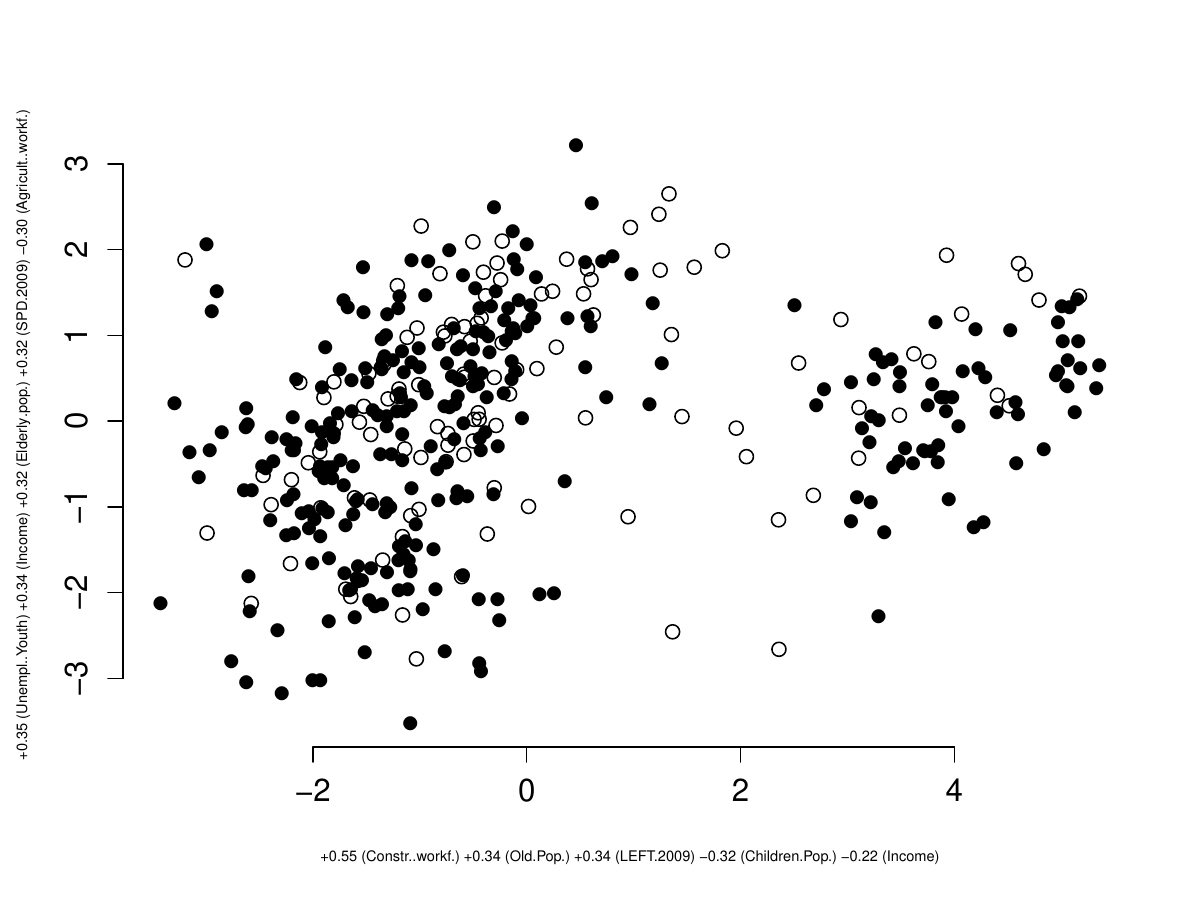} &
\includegraphics[width=0.49\textwidth]{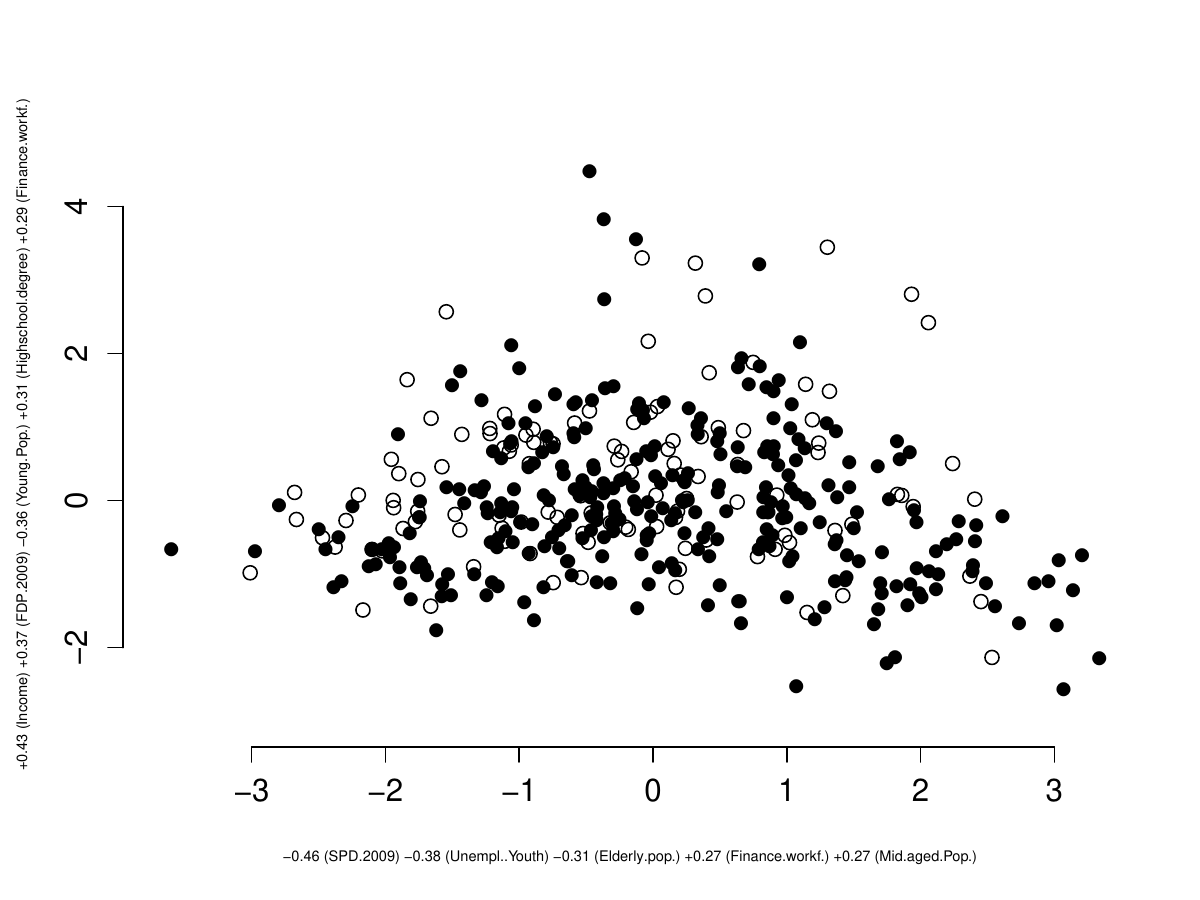}\\
(a) & (b)
\end{tabular}
  \caption{Views of the \textsc{german} data set corresponding to the
    hypothesis pairs (a) $\mathcal{H}_{F,\emptyset}$  and (b)
    $\mathcal{H}_{F,\{t\}}$. Data points inside the focus area
    are shown using filled circles ($\bullet$) and points outside the
    focus area are shown using hollow circles ($\circ$). The $x$ and
    $y$ axis labels show the five attributes with the largest absolute
    values in each projection vector.}
  \label{fig:exploration2}
 \end{figure}

 \begin{table}[t]
\centering
\begin{tabular}{l cccc}
\toprule
& $\bm{\mathcal{H}_{E,\emptyset}}$  &
  $\bm{\mathcal{H}_{E, \{t\}}}$     &
  $\bm{\mathcal{H}_{F, \emptyset}}$ &
  $\bm{\mathcal{H}_{F, \{t\}}}$ \\
\midrule
$\bm{v_{E, \emptyset}}$  & \textbf{8.831} & 4.211 & 1.921 &  1.195 \\
$\bm{v_{E, \{t\}}}$      & 8.105 & \textbf{8.875} &1.198 & 1.133 \\
$\bm{v_{F, \emptyset}}$  & 4.879 &  2.241 & \textbf{2.958} &  1.193 \\
$\bm{v_{F, \{t\}}}$      & 1.759 & 1.973 & 1.663 &  \textbf{1.762} \\
$\bm{v_\mathrm{pca}}$   & \textbf{8.831}& 4.211 & 1.921 &  1.195 \\
$\bm{v_\mathrm{ica}}$   & 0.005 & 0.005  & 1.000 &  0.999 \\
 \bottomrule
\end{tabular}
\caption{The value of the gain $G(v,\mathcal{H})$ for different
  projection vectors $v$ and hypothesis pairs $\mathcal{H}$.}
\label{tab:gains}
\end{table}

\paragraph{Comparison to PCA and ICA}
To demonstrate the utility of the views shown, we compute values of
the gain function as follows. We consider our four hypothesis pairs
$\mathcal{H}_{E, \emptyset}$, $\mathcal{H}_{E, \{t\}}$,
$\mathcal{H}_{F, \emptyset}$, and $\mathcal{H}_{F, \{t\}}$. For each
of these pairs, we denote the direction in which the two distributions
differ most in terms of the variance (solutions to
Equation~\eqref{eq:gain}) by $v_{E, \emptyset}$, $v_{E, \{t\}}$,
$v_{F, \emptyset}$, and $v_{F, \{t\}}$, respectively.  We then compute
the gain $G(v,\mathcal{H})$ for each $v\in \{v_{E, \emptyset}, v_{E,
  \{t\}}, v_{F, \emptyset}, v_{F, \{ t\}} \}$ and
$\mathcal{H}\in\{\mathcal{H}_{E, \emptyset}, \mathcal{H}_{E, \{t\}},
\mathcal{H}_{F, \emptyset}, \mathcal{H}_{F, \{t\}}\}$. For comparison,
we also compute the first principal component analysis (PCA) and
independent component analysis (ICA) \citep{hyvarinen1999fast}
projection vectors, denoted by $v_\mathrm{pca}$ and $v_\mathrm{ica}$,
respectively, and calculate the gain for different hypothesis pairs
using these. For ICA, we use the log-cosh~$G$ function and default
parameters of the R package {\texttt fastICA}.  The results are
presented in Table ~\ref{tab:gains}. We find that the gain is always
the highest when the projection vector matches the hypothesis pair
(highlighted in the table), as expected. This shows that the views
presented are indeed the most informative ones given the current
background knowledge and the hypothesis pair being investigated. We
also notice that the gain for PCA is equal to that of unguided data
exploration, as expected by Theorem~\ref{thm:pca}. When some
background knowledge is used or if we investigate a particular
hypothesis, the views with PCA or ICA objectives are less informative
than the one obtained using our framework. The gains close to zero for
the ICA objective are directions in which the variance of the more
constrained distribution is small due to, for example, linear
dependencies in the data.

\subsection{Exploration of the Accident Data Set}\label{ssec:other}
Due to the preprocessing, several columns in the \textsc{accident}
data set are used to encode the distinct categorical values in the
original data. If we now want to explore relationships between the
original variables, we can define a hypothesis pair, in which columns
corresponding to the same categorical attribute are grouped
together. We can thus investigate relations between attribute groups,
ignoring relations inside the groups.

\begin{table}[t]
\centering
  \begin{tabular}{l lcl}
    \toprule
    \textbf{Group} & \textbf{Attribute} & \textbf{\# Values} & \textbf{Interpretation} \\
\midrule
$C_1$  & AM1LK &  14 & Occupation of  victim \\
$C_2$  & EUSEUR & 8 & Days lost (severity of the accident) \\
$C_3$  & IKAL & 12 & Age of victim (5-year bins, except 0--14 and\\ & & &
 $>$65 years) \\
$C_4$  & NUORET & 4& Age of victim (0--15, 16--17, 18--19, and $>$19 years) \\
$C_5$  & RUUMIS & 33 &  Injured body part \\
$C_6$ & POIKKEA & 10& Deviation before accident\\
$C_7$ &  SATK & 12 & Month of  accident\\
$C_8$ &   SUKUP & 2 & Gender \\
$C_9$ &  TOLP & 22 & Main industry category \\
 $C_{10}$ &  TPISTE & 4 & 	Workstation \\
$C_{11}$ &  TYOSUOR & 9  & 	Specific physical activity \\
$C_{12}$ &   TYOTEHT & 31  & Work done at the time of  accident\\
$C_{13}$ &   VAHITAP &  16  & Contact-mode of injury \\
$C_{14}$ &      VAHTY & 2  & 	Accident class \\
$C_{15}$ &       VAMMAL & 13  &Type of injury\\
$C_{16}$ &       VPAIVA & 7  & Week day of  accident \\
$C_{17}$ &  VUOSI & 12  & Year of  accident\\
\bottomrule
  \end{tabular}
  \caption{Column groups in the focus tile in the exploration of the
    \textsc{accident} data set. For each attribute there are `\#
    Values' columns in the data set which are grouped together in the
    hypothesis tilings }
  \label{tab:focus:accs}
\end{table}

We define the hypothesis pair as follows. As the subset of rows $R$ we
choose all the 3000 rows in the \textsc{accident} data set. We then
consider a subset of the attributes $C = C_1\cup C_2\cup \cdots \cup
C_{17}$, where a summary of the attribute groupings
$C_1,\ldots,C_{17}$ is provided in Table~\ref{tab:focus:accs}.  The
hypothesis pair is then $\mathcal{H}_{A, \emptyset}=\langle
\emptyset+\mathcal{T}_{A_1}, \emptyset+\mathcal{T}_{A_2} \rangle$,
where $\mathcal{T}_{A_1}$ consists of a tile spanning all rows in $R$
and all columns in $C$ whereas $\mathcal{T}_{A_2}$ consists of tiles:
$t_i = (R, C_i)$, $i\in\{1,\ldots,17\}$.  The view in which the
distributions parametrised by $\mathcal{H}_{A ,\emptyset}$ differ the
most is shown in Figure~\ref{fig:exploration3}(a). Here we observe two
clear clusters. We select the points shown in purple and observe that
these data points correspond to accidents happening during travel to
work (\textsc{vahty.m}). Also, the attributes \textsc{tpiste},
\textsc{tyoteht}, \textsc{tyosuor}, \textsc{poikkea}, and
\textsc{vahitap} (all related to the type of work or the place of work
in which the accident occurred) have missing values (--) here, which
is natural when an accident happens during travelling to work. The
points in the complement of the purple selection on the other hand
correspond to accidents happening in the workplace (\textsc{vahty.p}).

Next, we add a tile $t_A$ with the selection of purple points in
Figure~\ref{fig:exploration3}(a) as the set of rows and all columns in
the {\sc accident} data as the set of columns to incorporate our knowledge
concerning these points into the exploration. We proceed to
consider the updated hypothesis pair $\mathcal{H}_{A, \{t_A\}}=\langle
\{t_A\}+\mathcal{T}_{A_1}, \{t_A\}+\mathcal{T}_{A_2} \rangle$. The
most informative view for this hypothesis pair is shown in
Figure~\ref{fig:exploration3}(b).  For illustration purposes we show
in purple the same selection of rows in data as in
Figure~\ref{fig:exploration3}(a). We can now select, for example, the
data points shown in orange in Figure~\ref{fig:exploration3}(b) for
further inspection. This selection corresponds to accidents in the
workplace (\textsc{vahty.p}) which happened mainly to women
(\textsc{sukup.n}) over 19 years old (\textsc{nuoret.4}), 
resulting in 31--90 days of absence from work (\textsc{euseur.9}).

\begin{figure}[t]
 \centering
\begin{tabular}{c@{}c}
\includegraphics[width=0.49\textwidth]{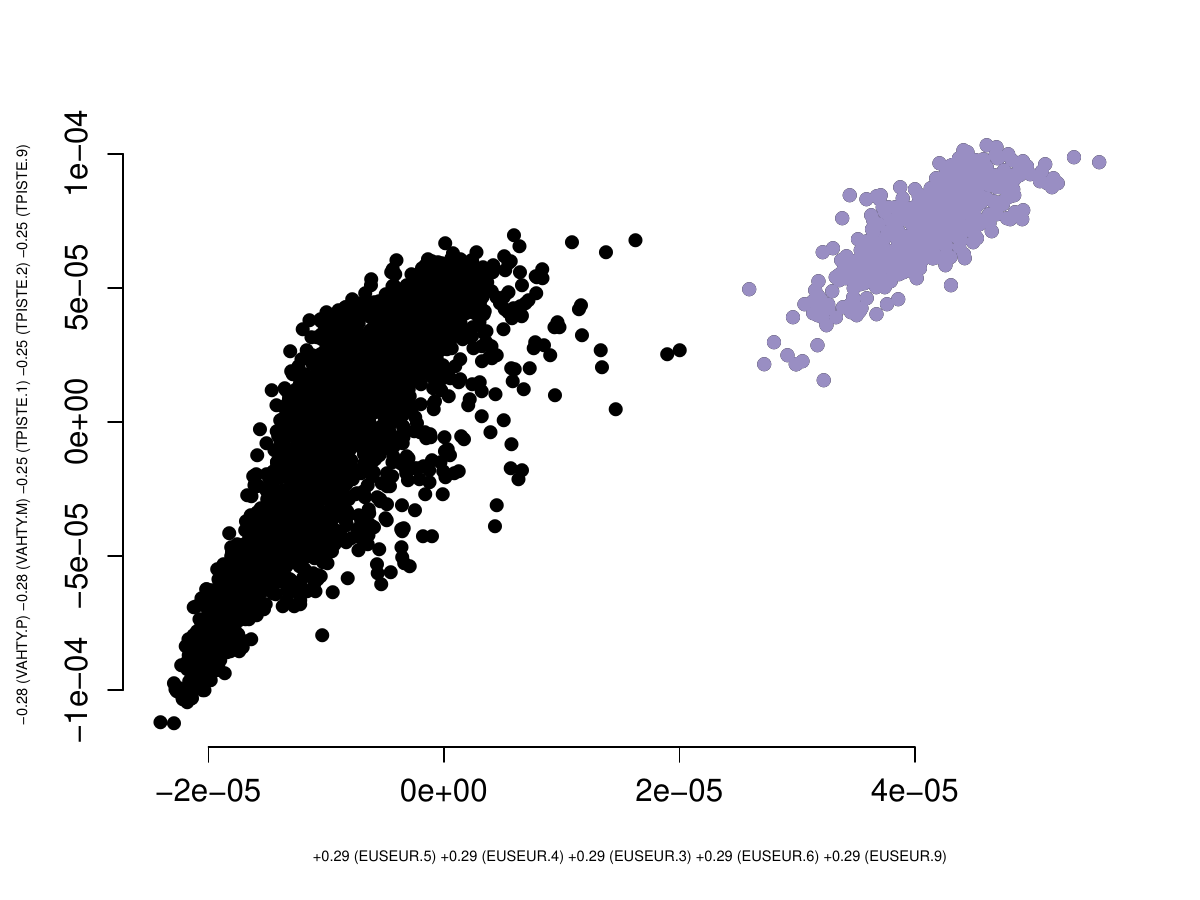} &
\includegraphics[width=0.49\textwidth]{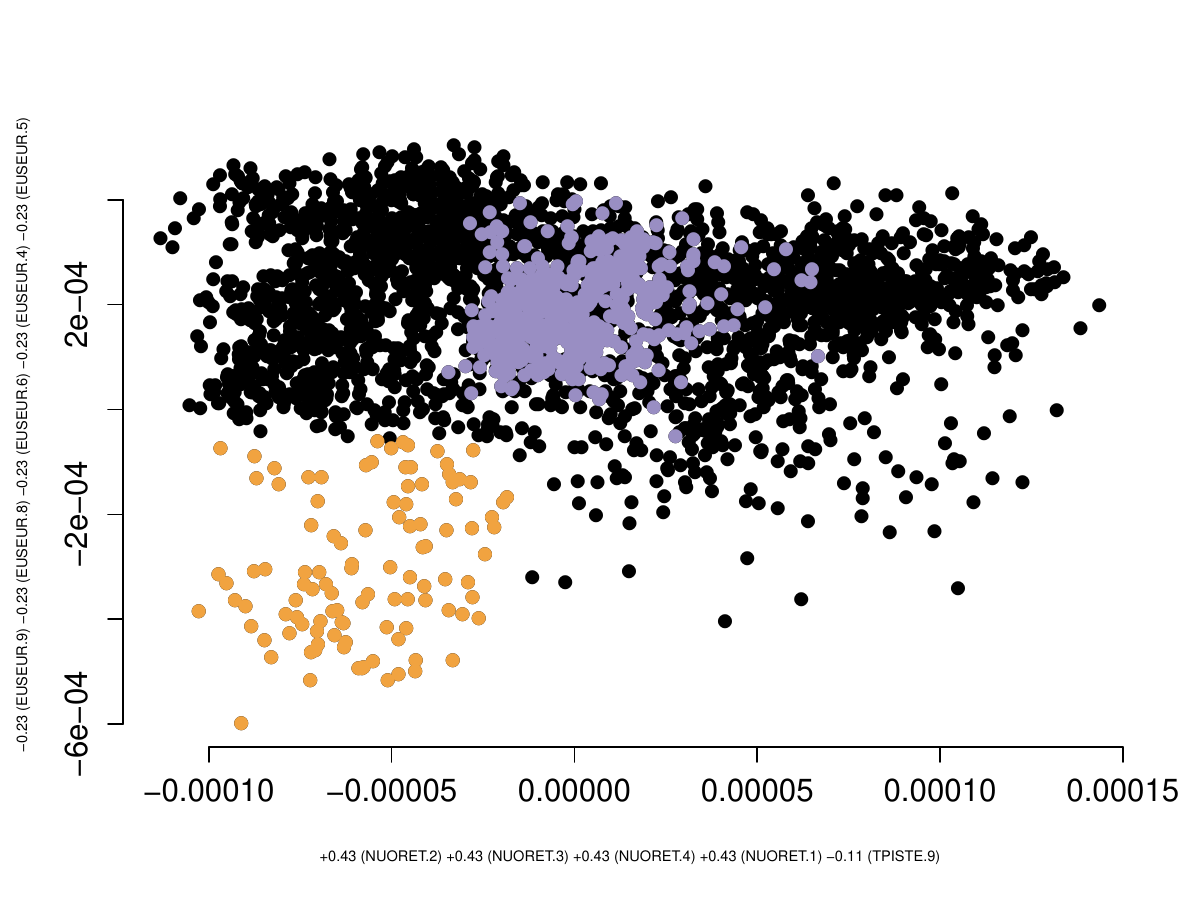} \\
(a) & (b)
\end{tabular}
  \caption{Views of the \textsc{accident} data set corresponding to
    the hypothesis pairs (a) $\mathcal{H}_{A,\emptyset}$ and (b)
    $\mathcal{H}_{A,\{t_A\}}$.  Filled circles show the data points;
    the selected points are marked with purple/orange, see description
    in the text.  The $x$ and $y$ axis labels show the five attributes
    with the largest absolute values in each projection vector.}
  \label{fig:exploration3}
 \end{figure}

\section{Conclusions}\label{sec:conclusions}
In this paper we propose an interactive visual data exploration
framework integrating the user's background knowledge (increasing
iteratively during the exploration) and the user's current exploration
interests in a principled way. We provide an efficient implementation
of this method using constrained randomisation. Furthermore, we also
extended PCA to work seamlessly with the framework in the case of
real-valued data sets.

Typical real-world data sets, for instance those empirically
investigated in this paper, contain a vast number of interesting
patterns. The goal of the data analyst is to find interesting
relations in the data. If automated analysis methods are used to
extract patterns, it means that the patterns must be specified in
advance to be used in conjunction with some data mining
algorithm. Specifying patterns in advance is clearly nontrivial when
there is a multitude of variable combinations that must be taken into
account. Furthermore, if patterns are only extracted based on a priori
specifications it is not possible to use insights obtained during the
exploration to steer further exploration.

It is here where the power of human-guided data exploration lies. A
non-interactive data mining method is restricted to either show
generic features of the data---which may already be obvious to an
expert---or output unusably many patterns, which is a typical problem,
for example, in frequent pattern mining (there are easily too many
patterns for the user to absorb). Our framework solves this problem:
by integrating the human's background knowledge and focus---formulated
as mathematically defined hypotheses---we can at the same time guide
the search towards topics interesting to the user at any particular
moment while taking the user's prior knowledge into account in an
understandable and efficient way. Hence, the framework described in
this paper makes it possible to interactively and efficiently explore
relations between attributes in the data through a conceptually simple
paradigm where the relations are encoded using tile constraints. This
exploration framework allows the data analyst to use his or her innate
pattern recognition skills to spot complex patterns, instead of having
to specify them in advance. As demonstrated in our empirical
evaluation of two real-world data sets, the proposed interactive
exploration framework allows us to find interesting patterns and hence
to make sense of the relations in the data.

Our work contains implicit assumptions about the human cognitive
processing, such that the user's knowledge can be modelled using a
background distribution. The validity of these assumptions is an
interesting topic for future research. For example, the order in which
different relations are observed probably matters to a real user,
whereas our formulation is invariant under ordering of the relations.
Also, the user is probably not able to model very fine-grained
distributions, while our mathematical formulation of the background
distribution can become extremely complex when the number of
constraints grows.

As a potential direction for future work we consider the extension of
the proposed method to understand classifiers or regression functions
in addition to static data.  Extending the ideas used here to
different data types such as, for example, time series, is also worth
investigating.  Finding an efficient algorithm that could find a
sparse solution to the optimisation problem of
Equation~\eqref{eq:gain} would also be an interesting problem. To the
best of our knowledge, no such solution is readily available. We note
that the solutions for sparse PCA are not directly applicable here:
sparse PCA would indeed give a sparse variant of the vector $w$ in
Theorem~\ref{thm:opt}. However, this would not result in a sparse
$v_{\mathcal{H}}=Ww$. Furthermore, we plan to study how to
incorporate in our framework a scheme for evaluating the statistical
significance of the visually observed patterns.

In our work, we chose to use a linear projection and showed that our
method reduces to PCA at the limit of generic objectives and no
background information. We showed that linear projections can be
computed efficiently for arbitrary background knowledge and 
objectives (expressed in our tile notation). We could, however, in
principle replace the linear projection by a non-linear embedding
that would in the same way show differences between the hypothesis pairs.
In their recent work, \citet{Kang2020} presented a variant of t-SNE
that can be used to produce informative visualisations with the
background information parametrised by a partition of data points into
known classes. Their method is not applicable to generic tile
constraints, but it might
be possible to use their ideas to develop a
non-linear embedding that would work with
generic tile constraints, in which case the resulting visualisation
method could be used as a drop-in replacement for the linear
projection presented in this paper.

In this paper we have implicitly assumed that the exploration takes
place in one exploration session by a single user. In practical
applications it might make sense to incorporate data constraints
learned on earlier exploration sessions or by other users into the
model, and to modify the hypotheses based on the insights gained.

Finally, we have implemented an open source R package that allows us
to simulate interactive visual data exploration using our
framework. The framework is available under an open source license
from \url{https://github.com/edahelsinki/corand/} and it includes, in
addition to the code needed to run the experiments in this paper, an
interactive web-based interface prototype. We have also earlier
released a preliminary prototype called {\sc tiler}
\citep{ecml-pkdd:tiler}, which includes the tile-based constrained
randomisation approach, but does not implement the dimensionality
reduction method presented in this work.


\acks{We thank Buse Gul Atli for discussions and contributions to the
  preprint \citep*{puolamaki2018human}. We thank the Finnish Workers'
  Compensation Center for the access to the accident data. This work
  was supported by the Academy of Finland (decisions 326280 and
  326339).}
\appendix

\section{Algorithm for merging tiles} \label{app:algo}

Merging a new tile into a tiling where all tiles are non-overlapping
can be done efficiently using Algorithm~\ref{alg:merge}. We assume
that the starting point is always a non-overlapping set of tiles and
hence we only need to consider the overlap that the new tile has with
the tiles in the tiling. This is similar to the merging of statements
considered by \citet{kalofolias:2016}. The algorithm has two
steps. Let $\mathcal{T}$ be the current tiling and $t=(R,C)$ the new
tile to be added. In the first step (lines 1--11) we identify the
tiles in $\mathcal{T}$ with which $t$ overlaps, and in the second step
(lines 12--17) we resolve (merge) the overlap between $t$ and the
tiles identified in the previous step.

The first step proceeds as follows. An empty hash map is initialised
(line 1) to be used to detect overlap between columns of the tiles in
$\mathcal{T}$ and the new tile $t$. We proceed to iterate over each
row $R$ in the new tile (lines 2--11). Since $\mathcal{T}$ is a
tiling, all its tiles are non-overlapping. We can thus store
$\mathcal{T}$ in a matrix of the same size as the data matrix where
each element corresponds to the ID of the tile that covers that
position. With a slight abuse of notation, $\mathcal{T}$ in the
algorithm refers to such a matrix. Now, given a row $i \in R$ and a
set of columns $C$ (line 3) we then get the IDs of the tiles on row
$i$ with which $t$ overlaps. We store this in $K$. The hash map is
used to detect if this row has been seen before, that is, whether $K$
is a key in $S$ (line 4). If this is the first time this row is seen,
$K$ is used as the key for a new element in the hash map and $S(K)$ is
initialised to be a tuple (line 5). Elements in this tuple are
referred to by name, for instance, $S(K)_\mathrm{rows}$ gives the set
of rows associated with the key $K$, while $S(K)_\mathrm{id}$ gives
the set of tile IDs. On lines 6 and 7 we store the current row index
$S(K)_\mathrm{rows}$ and the unique tile IDs $S(K)_\mathrm{id}$ in the
tuple. If the row was seen before, the row set associated with these
tile IDs is updated (line 9). After this first step, the hash map $S$
contains tuples of the form (\emph{rows}, \emph{id}) where \emph{id}
specifies the IDs of the tiles with which $t$ overlaps at the rows
specified by \emph{rows}.

In the second step of the algorithm (lines 12--17), we first determine
the currently largest tile ID in use (line 12). After this we iterate
over the tuples in the hash map $S$. For each tuple we must update the
tiles having IDs $S(K)_\mathrm{id}$ and on line 14 we hence find the
columns associated with these tiles. After this, the IDs of the
affected overlapping tiles are updated (line 15), and the tile ID
counter is incremented (line 16). Finally, the updated tiling is
returned on line 18. The time complexity of the tile merging algorithm
is $\mathcal{O}{(n m)}$.

\begin{algorithm2e}[t]
 \SetKwInOut{Input}{input}
 \SetKwInOut{Output}{output}

 \Input{Tiling $\mathcal{T}$ as an $n \times m$ data matrix where an
   element is the ID of the tile to which it belongs, and a tile $t =
   (R, C)$.}
 \Output{$\mathcal{T} + \{t\}$ (the tiling in which
   $\mathcal{T}$ is merged with $t$).}

 $S \leftarrow \texttt{HashMap}$\;

 \For{$i \in R$} {
   $K \leftarrow \mathcal{T}(i, C)$\;
   \uIf { $K \notin \texttt{keys}(S) $}{
     $S(K) \leftarrow \texttt{Tuple}$\;
     $S(K)_\mathrm{rows} \leftarrow \{i\}$\;
     $S(K)_\textrm{id} \leftarrow \texttt{unique}( \mathcal{T}(i, C))$\;
   }      \Else{
     $S(K)_\textrm{rows} \leftarrow  S(K)_\textrm{rows} \cup \{i\}$\;
   }
 }

 $p_\mathrm{max} \leftarrow \max(\mathcal{T}(R,C))$\;
 \For{$K \in \texttt{keys}(S)$}{
   $C' = \left\{ c \mid \mathcal{T}(S(K)_\mathrm{rows}, c) \in S(K)_\textrm{id}  \right\}$ \;
   $\mathcal{T}\left(S(K)_\textrm{rows}, C'\right) \leftarrow p_\mathrm{max} + 1$ \;
   $p_\mathrm{max} \leftarrow p_\mathrm{max} + 1$\;
     }
\BlankLine
 \Return{$\mathcal{T}$}
 \BlankLine
  \BlankLine
\caption{\label{alg:merge} Merging a tile $t$ with the tiles in a
  tiling $\mathcal{T}$. The function \texttt{HashMap} denotes a hash
  map. The value in a hash map $H$ associated with a key $x$ is $H(x)$
  and $\texttt{keys}(H)$ gives the keys of $H$. The function
  \texttt{Tuple} creates a (named) tuple. An element $a$ in a tuple $w
  = (\mathrm{a}, \mathrm{b})$ is accessed as $w_\mathrm{a}$. The
  function \texttt{unique} returns the unique elements of an array.}
\end{algorithm2e}


\end{document}